\newtheorem{thm}{Theorem}
\newtheorem{lem}{Lemma}
\newtheorem{prop}{Proposition}
\newtheorem{cor}{Corollary}
\newtheorem{aspt}{Assumption}
\newtheorem{defn}{Definition}
\newtheorem*{thm*}{Theorem}
\newtheorem*{lem*}{Lemma}
\newtheorem*{prop*}{Proposition}
\newtheorem*{cor*}{Corollary}
\newtheorem*{conj*}{Conjecture}
\newtheorem*{aspt*}{Assumption}
\newtheorem*{claim*}{Claim}
\newtheorem*{rmk*}{Remark}
\newtheorem*{commt*}{Comment}
\newtheorem*{defn*}{Definition}
\newcommand{\vect}[1]{\boldsymbol{#1}}
\newcommand{\E}{\mathbb{E}}
\DeclareMathOperator*{\argmax}{arg\,max}
\DeclareMathOperator*{\argmin}{arg\,min}
\newcommand\numberthis{\addtocounter{equation}{1}\tag{\theequation}}
\newcommand{\genComment}[2]{\ifnum\comments=1{\textcolor{#1}{\textsf{\footnotesize #2}}}\fi}
\newcommand{\ziping}[1]{\genComment{green}{[ZP:#1]}}
\title{Representation Learning Beyond Linear Prediction Functions}
\author{%
  \texttt{Ziping Xu} \\
  Department of Statistics\\
  University of Michigan, Ann Arbor\\
  \texttt{zipingxu@umich.edu} \\
  \\
  \texttt{Ambuj Tewari}\\
  Department of Statistics\\
  University of Michigan, Ann Arbor\\
  \texttt{tewaria@umich.edu}
}
\begin{document}
\maketitle

\begin{abstract}
Recent papers on the theory of representation learning has shown the importance of a quantity called diversity when generalizing from a set of source tasks to a target task. Most of these papers assume that the function mapping shared representations to predictions is linear, for both source and target tasks. In practice, researchers in deep learning use different numbers of extra layers following the pretrained model based on the difficulty of the new task. This motivates us to ask whether diversity can be achieved when source tasks and the target task use different prediction function spaces beyond linear functions. We show that diversity holds even if the target task uses a neural network with multiple layers, as long as source tasks use linear functions. If source tasks use nonlinear prediction functions, we provide a negative result by showing that depth-1 neural networks with ReLu activation function need exponentially many source tasks to achieve diversity. For a general function class, we find that eluder dimension gives a lower bound on the number of tasks required for diversity. Our theoretical results imply that simpler tasks generalize better. Though our theoretical results are shown for the global minimizer of empirical risks, their qualitative predictions still hold true for gradient-based optimization algorithms as verified by our simulations on deep neural networks.
\end{abstract}

\section{Introduction}
It has become a common practice \citep{tan2018survey} to use a pre-trained network as the representation for a new task with $\textit{small}$ sample size in various areas including computer vision \citep{marmanis2015deep}, speech recognition \citep{dahl2011context,jaitly2012application,howard2018universal} and machine translation \citep{weng2020acquiring}. Most representation learning is based on the assumption that the source tasks and the target task share the same low-dimensional representation.

In this paper, we follow the previous work \citep{tripuraneni2020theory} on representation learning by assuming each task, labeled by $t$, generates observations noisily from the mean function $f_t^* \circ h^*$, where $h^*$ is the true representation shared by all the tasks and $f_t^*$ is called the prediction function. We assume $h^* \in \mathcal{H}$, the representation function space and $f_t \in \mathcal{F}_t$, the prediction function space.
\cite{tripuraneni2020theory} proposed a diversity condition which guarantees that the learned representation will generalize to target tasks with any prediction function. Assume we have $T$ source tasks labeled by $1, \dots, T$ and $\mathcal{F}_1 = \dots = \mathcal{F}_T = \mathcal{F}_{so}$. We denote the target task by $ta$ and let $\mathcal{E}_t(f, h)\in \mathbb{R}$ be the excess error of $f, h \in \mathcal{F}_t \times \mathcal{H}$ for task $t$. The diversity condition can be stated as follows: there exists some $\nu > 0$, such that for any $f_{ta}^* \in \mathcal{F}_{ta}$ and any $h \in \mathcal{H}$,
$$
    \underset{\text{Excess error given $h$ for target task}}{\inf_{ f_{ta} \in \mathcal{F}_{ta}}\mathcal{E}_{ta}( f_{ta}, h)}  \leq \nu \underset{\text{Excess error given $h$ for source tasks}}{\inf_{ f_t \in \mathcal{F}_{so}, t = 1, \dots, T} \frac{1}{T}\sum_{t = 1}^T\mathcal{E}_t( f_t, h)}.  
$$
The diversity condition relates the excess error for source tasks and the target task with respect to any fixed representation $h$. A smaller $\nu$ indicates a better transfer from source tasks to the target task. Generally, we say $T$ source tasks achieve diversity over $\mathcal{F}_{ta}$ when $\nu$ is finite and relatively small. 

The number of source tasks plays an important role here. To see this, assume $\mathcal{F}_{so} = \mathcal{F}_{ta} = \mathcal{F}$ is a discrete function space and each function can be arbitrarily different. In this case, we will need the target task to be the same as at least one source task, which in turn requires $T \geq |\mathcal{F}|$ and $\nu \geq |\mathcal{F}|$. So far, it has only been understood that when \textit{both} source tasks and target task use \textit{linear} prediction functions, it takes at least $d$ source tasks to be diverse, where $d$ is the number of dimension of the linear mappings. This paper answers the following two open questions with a focus on the deep neural network (DNN) models:
$$
\begin{array}{cc}
     \textit{How does representation learning work when $\mathcal{F}_{so} \neq \mathcal{F}_{ta}$?}\\
     \textit{How many source tasks do we need to achieve diversity with nonlinear prediction functions?}  
\end{array}
$$

There are strong practical motivations to answer the above two questions. In practice, researchers use representation learning despite the difference in difficulty levels of source and target tasks. We use a more complex function class for substantially harder target task, which means $\mathcal{F}_{so} \neq \mathcal{F}_{ta}$. This can be reflected as extra layers when a deep neural network model is used as prediction functions.  On the other hand, the source task and target task may have different objectives. Representation pretrained on a classification problem, say ImageNet, may be applied to object detection or instance segmentation problems. For instance, \citet{oquab2014learning} trained a DNN on ImageNet and kept all the layers as the representation except for the last linear mapping, while two fully connected layers are used for the target task on object detection.

Another motivation for our work is the mismatch between recently developed theories in representation learning and the common practice in empirical studies. Recent papers on the theory of representation learning all require multiple sources tasks to achieve diversity so as to generalize to any target task in $\mathcal{F}$ \citep{maurer2016benefit,du2020few,tripuraneni2020theory}. However, most pretrained networks are only trained on a single task, for example, the ImageNet pretrained network. To this end, we will show that a single multi-class classification problem can be diverse.

Lastly, while it is common to simply use linear mapping as the source prediction function, there is no clear theoretical analysis showing whether or not diversity can be achieved with \textit{nonlinear} prediction function spaces.  

\paragraph{Main contributions.}
We summarize the main contributions made by this paper.
\begin{enumerate}
    \item We show that diversity over $\mathcal{F}_{so}$ implies diversity over  $\mathcal{F}_{ta}$, when both $\mathcal{F}_{so}$ and $\mathcal{F}_{ta}$ are DNNs and $\mathcal{F}_{ta}$ has more layers. More generally, the same statement holds when $\mathcal{F}_{ta}$ is more complicated than $\mathcal{F}_{so}$, in the way that $\mathcal{F}_{ta} = \mathcal{F}'_{ta} \circ (\mathcal{F}_{so}^{\otimes m})$ for some positive integer $m$\footnote{$\mathcal{F}^{\otimes T}$ is the $T$ times Cartesian product of $\mathcal{F}$.} and function class $\mathcal{F}_{ta}'$.
    \item Turning our attention to the  analysis of diversity for non-linear prediction function spaces, we show that for a depth-1 NN, it requires $\Omega(2^{d})$ many source tasks to establish diversity with $d$ being the representation dimension. For general  $\mathcal{F}_{so}$, we provide a lower bound on the number of source tasks required to achieve diversity using the eluder dimension \citep{russo2013eluder} and provide a upper bound using the generalized rank \citep{li2021eluder}.
    \item We show that, from the perspective of achieving diversity, a single source task with multiple outputs can be equivalent to multiple source tasks. While our theories are built on empirical risk minimization, our simulations on DNNs for a multi-variate regression problem show that the general statement still hold when a stochastic gradient descent is used for optimization. 
\end{enumerate}


\section{Preliminaries}
We first introduce the mathematical setup of the problem studied in this paper along with the two-phase learning method that we will focus on.

\paragraph{Problem setup.} Let $\mathcal{X}$ denote the input space. We assume the same input distribution $P_X$  for all tasks, as covariate shift is not the focus of this work. In our representation learning setting, there exists a generic feature representation function $h^* \in \mathcal{H}:\mathcal{X} \mapsto \mathcal{Z}$ that is shared across different tasks, where $\mathcal{Z}$ is the feature space and $\mathcal{H}$ is the representation function space. Since we only consider the different prediction functions, each task, indexed by $t$, is defined by its prediction function $f_t^* \in \mathcal{F}_t:\mathcal{Z} \mapsto \mathcal{Y}_t$, where $\mathcal{F}_t$ is the prediction function space of task $t$ and $\mathcal{Y}_t$ is the corresponding output space. The observations $Y_t = f_t^* \circ h^* (X) + \epsilon$ are generated noisily with mean function $f_t^* \circ h^*$, where $X \sim P_{X}$ and $\epsilon$ is zero-mean noise that is independent of $X$.


Our representation is learned on source tasks $\vect{f}^*_{so} \coloneqq (f_1^*, \dots, f_T^*) \in \mathcal{F}_{1} \times \dots \times \mathcal{F}_{T}$ for some positive integer $T$. We assume that all the prediction function spaces $\mathcal{F}_t = \mathcal{F}_{so}$ are the same over the source tasks. We denote the target task by $f^*_{ta} \in \mathcal{F}_{ta}$, where $\mathcal{F}_{ta}$ is the target prediction function space. Unlike the previous papers \citep{du2020few,tripuraneni2020theory,maurer2016benefit}, which all assume that the same prediction function space is used for all tasks, we generally allow for the possibility that $\mathcal{F}_{so} \neq \mathcal{F}_{ta}$.

\paragraph{Learning algorithm.} 
We consider the same two-phase learning method as in \cite{tripuraneni2020theory}. In the first phase (the training phase), $\vect{n} = (n_1, \dots, n_T)$ samples from each task are available to learn a good representation. In the second phase (the test phase), we are presented $n_{ta}$ samples from the target task to learn its prediction function using the pretrained representation learned in the training phase.

We denote a dataset of size $n$ from task $f_t$ by $S^{n}_t = \{(x_{ti}, y_{ti})\}_{i = 1}^{n}$. We use empirical risk minimization (ERM) for both phase. In the training phase, we minimize average risks over $\{S_t^{n_t}\}_{t = 1}^{T}$:
$$
    \hat R(\vect{f}, {h} \mid \vect{f}^*_{so}) \coloneqq \frac{1}{\sum_t n_t}\sum_{t = 1}^T \sum_{i = 1}^{n_t} l_{so}(f_t \circ h(x_{ti}), y_{ti}),
$$
where $l_{so}: \mathcal{Y} \times \mathcal{Y} \mapsto \mathbb{R}$ is the loss function for the source tasks and $\vect{f} = (f_1, \dots, f_T) \in \mathcal{F}_{so}^{\otimes T}$. The estimates are given by $(\hat{\vect{f}}_{so}, \hat h) \in \argmin_{\vect{f}, h} \hat R(\vect{f}, h \mid \vect{f}^*_{so})$.
In the second phase, we obtain the dataset  $\{x_{ta, i}, y_{ta, i}\}_{i}^{n_{ta}}$ from the target task and our predictor $\hat f_{ta}$ is given by
$$
    \argmin_{f \in \mathcal{F}_{ta}} \hat R(f, \hat h \mid f_{ta}^*) \coloneqq \frac{1}{n_{ta}}\sum_{i = 1}^{n_{ta}} l_{ta}(f \circ \hat h(x_{ta, i}), y_{ta, i}),
$$
for some loss function $l_{ta}$ on the target task. We also use $R(\cdot, \cdot \mid \cdot)$ for the expectation of the above empirical risks. We denote the generalization error of certain estimates $f, h$ by $\mathcal{E}(f, h \mid \cdot) \coloneqq R(f, h \mid \cdot) -  \min_{f'\in\mathcal{F}, h' \in \mathcal{H}} R(f', h' \mid \cdot)$, where $\mathcal{F}$ can be either $\mathcal{F}_{so}^{\otimes T}$ or $\mathcal{F}_{ta}$ depending on the tasks. Our goal is to bound the generalization error of the target task. 

For simplicity, our results are presented under square loss functions. However, we show that our results can generalize to different loss functions in Appendix \ref{app:loss}. We also assume $n_1 = \dots = n_T = n_{so}$ for some positive integer $n_{so}$.

\subsection{Model complexity}
As this paper considers general function classes, our results will be presented in terms of the complexity measures of classes of functions. We follow the previous literature \citep{maurer2016benefit,tripuraneni2020theory} which uses Gaussian complexity. Note that we do not use the more common Rademacher complexity as the proofs require a decomposition theorem that only holds for Gaussian complexity. 

For a generic vector-valued function class $\mathcal{Q}$ containing functions $q:\mathbb{R}^d \mapsto \mathbb{R}^r$ and $N$ data points, $X_N = (\vect{x}_1, \dots, \vect{x}_N)^T$, the empirical Gaussian complexity is defined\footnote{Note that the standard definition has a $1/N$ factor instead of $1/\sqrt{N}$. We use the variant so that $\hat{\mathfrak{G}}_N$ does not scale with $N$ in most of the cases we consider and only reflects the complexity of the class.} as
$$
    \hat{\mathfrak{G}}_{N}(\mathcal{Q})=\mathbb{E}_{\mathbf{g}}\left[\sup _{\mathbf{q} \in \mathcal{Q}} \frac{1}{\sqrt{N}} \sum_{i=1}^{N} g_{i}^T q(\vect{x}_{i})\right], \quad g_{i} \sim \mathcal{N}(0,I_r)\quad  i.i.d.
$$ 
The corresponding population Gaussian complexity is defined as $\mathfrak{G}_{N}(\mathcal{Q})=\mathbb{E}_{X_N}\left[\hat{\mathfrak{G}}_{N}(\mathcal{Q})\right]$, where the expectation is taken over the distribution of $X_N$.




\subsection{Diversity}
Source tasks have to be diverse enough, to guarantee that the representation learned from source tasks can be generalized to any target task in $\mathcal{F}_{ta}$. 
To measure when transfer can happen, we introduce the following two definitions, namely that of \textit{transferability} and \textit{diversity}.

\begin{defn}
\label{defn:transferable}
For some $\nu, \mu > 0$, we say the source tasks $f_1^*, \dots, f_T^* \in \mathcal{F}_{so}$ are $(\nu, \mu)$-transferable to task $f_{ta}^*$ if 
$$
    \sup_{h \in \mathcal{H}} \frac{\inf_{f \in \mathcal{F}_{ta}}\mathcal{E}(f, h \mid f_{ta}^*)}{\inf_{\vect{f} \in \mathcal{F}_{so}^{\otimes T}}\mathcal{E}(\vect{f}, h \mid \vect{f}^*_{so}) + \mu / \nu} \leq \nu.
$$
Furthermore, we say they are $(\nu, \mu)$-diverse over $\mathcal{F}_{ta}$ if above ratio is bounded for any true target prediction functions $f_{ta}^* \in \mathcal{F}_{ta}$, i.e.
$$
        \sup_{f_{ta}^* \in \mathcal{F}_{ta}}\sup_{h \in \mathcal{H}} \frac{\inf_{f \in \mathcal{F}_{ta}}\mathcal{E}(f, h \mid f_{ta}^*)}{\inf_{\vect{f} \in \mathcal{F}_{so}^{\otimes T}}\mathcal{E}(\vect{f}, h \mid \vect{f}^*_{so}) + \mu / \nu} \leq \nu.
$$
\end{defn}

When it is clear from the context, we denote $\mathcal{E}(f, h \mid f_{ta}^*)$ and $\mathcal{E}(\vect{f}, h \mid \vect{f}_{so}^*)$ by $\mathcal{E}_{ta}(f, h)$ and $\mathcal{E}_{so}(\vect{f}, h)$, respectively. We will call $\nu$ the transfer component and $\mu$, the bias introduced by transfer. 
The definition of transferable links the generalization error between source tasks and the target task as shown in Theorem \ref{thm:approx_error1}. The proof can be found in Appendix \ref{app:approx_error1}.
\begin{thm}
\label{thm:approx_error1}
If source tasks $f_1^*, \dots, f_T^*$ are $(\nu, \mu)$-diverse over $\mathcal{F}_{ta}$, then for any $f_{ta}^* \in \mathcal{F}_{ta}$, we have
$$
    \mathcal{E}_{ta}(\hat f_{ta}, \hat h) \leq \nu \mathcal{E}_{so}({\hat{\vect{f}}_{so}}, \hat h) + \mu +  \frac{\sqrt{2 \pi} \hat{\mathfrak{G}}_{n_{t a}}\left(\mathcal{F}_{t a} \circ \hat{h}\right)}{\sqrt{n_{t a}}}+\sqrt{\frac{9 \ln (2 / \delta)}{2 n_{t a}}}.
$$
\end{thm}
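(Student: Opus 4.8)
The plan is to split the target-task excess risk at the best-in-class predictor and control the two resulting pieces by entirely separate means: the \emph{approximation} piece $\inf_{f\in\mathcal{F}_{ta}}\mathcal{E}_{ta}(f,\hat h)$ by the diversity hypothesis, and the \emph{estimation} piece $\mathcal{E}_{ta}(\hat f_{ta},\hat h)-\inf_{f\in\mathcal{F}_{ta}}\mathcal{E}_{ta}(f,\hat h)$ by a standard uniform-convergence argument for the second-phase ERM. The identity $\mathcal{E}_{ta}(\hat f_{ta},\hat h)=\inf_{f}\mathcal{E}_{ta}(f,\hat h)+\bigl(\mathcal{E}_{ta}(\hat f_{ta},\hat h)-\inf_{f}\mathcal{E}_{ta}(f,\hat h)\bigr)$ reduces the whole proof to bounding these two pieces and adding them.

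For the approximation piece I would invoke $(\nu,\mu)$-diversity. Since $\hat h$ is the first-phase ERM it lies in $\mathcal{H}$, and the supremum over $h\in\mathcal{H}$ in Definition~\ref{defn:transferable} holds pointwise, so it applies to the data-dependent $\hat h$ at no probabilistic cost. Rearranging the defining inequality at $h=\hat h$ gives $\inf_{f\in\mathcal{F}_{ta}}\mathcal{E}_{ta}(f,\hat h)\le \nu\,\inf_{\vect f\in\mathcal{F}_{so}^{\otimes T}}\mathcal{E}_{so}(\vect f,\hat h)+\mu$, and bounding the infimum over $\vect f$ by its value at the realized estimate $\hat{\vect f}_{so}\in\mathcal{F}_{so}^{\otimes T}$ yields $\inf_{f\in\mathcal{F}_{ta}}\mathcal{E}_{ta}(f,\hat h)\le \nu\,\mathcal{E}_{so}(\hat{\vect f}_{so},\hat h)+\mu$, which are precisely the first two terms on the right.

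For the estimation piece I would condition on $\hat h$, which depends only on the first-phase sample and is therefore independent of the fresh target data; once $\hat h$ is fixed the quantity equals $R(\hat f_{ta},\hat h\mid f_{ta}^*)-\inf_{f\in\mathcal{F}_{ta}}R(f,\hat h\mid f_{ta}^*)$, the excess risk of an ERM over the fixed class $\mathcal{F}_{ta}\circ\hat h$. I would bound it by the usual chain: insert and subtract empirical risks, use that $\hat f_{ta}$ minimizes $\hat R(\cdot,\hat h\mid f_{ta}^*)$ to discard the in-sample comparison, and dominate the remainder by the uniform deviation $\sup_{f}\bigl(R-\hat R\bigr)$. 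Symmetrization bounds its expectation by the Rademacher complexity of the square-loss class; a Lipschitz contraction (valid for the square loss under boundedness of $\mathcal{F}_{ta}\circ\hat h$ and the labels) peels the loss off and leaves the complexity of $\mathcal{F}_{ta}\circ\hat h$; and the comparison $\mathfrak{R}_N\le\sqrt{\pi/2}\,\mathfrak{G}_N$ converts this to Gaussian complexity, producing the factor $\sqrt{2\pi}$ after the symmetrization factor $2$. Passing from population to \emph{empirical} Gaussian complexity (so that $\hat{\mathfrak{G}}_{n_{ta}}$ appears) and upgrading the in-expectation bound to hold with probability $1-\delta$ each cost one application of McDiarmid's bounded-difference inequality, whose combined constant $3$ gives the $\sqrt{9\ln(2/\delta)/(2n_{ta})}$ term; the $1/\sqrt{N}$ normalization from the complexity footnote is what turns the bound into the displayed $\hat{\mathfrak{G}}_{n_{ta}}(\mathcal{F}_{ta}\circ\hat h)/\sqrt{n_{ta}}$.

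The main obstacle is this estimation step rather than the diversity step. I expect the delicate points to be (i) justifying the conditioning on $\hat h$ and the resulting independence, so that empirical Gaussian complexity evaluated at the realized $\hat h$ is the right object; (ii) verifying the Lipschitz/boundedness hypotheses needed for the (possibly vector-valued) contraction that strips the square loss; and (iii) tracking constants through symmetrization, the Rademacher-to-Gaussian conversion, and the two McDiarmid applications so as to land exactly on $\sqrt{2\pi}$ and $\sqrt{9\ln(2/\delta)/(2n_{ta})}$. By contrast the diversity step is a one-line rearrangement, so essentially all the analytic work lives in the fixed-representation generalization bound for the target regression.
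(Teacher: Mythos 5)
Your proposal matches the paper's proof: the paper likewise splits $\mathcal{E}_{ta}(\hat f_{ta},\hat h)$ into the approximation term $\inf_{f\in\mathcal{F}_{ta}}\mathcal{E}_{ta}(f,\hat h)$ plus an estimation term, imports the estimation bound $\sqrt{2\pi}\,\hat{\mathfrak{G}}_{n_{ta}}(\mathcal{F}_{ta}\circ\hat h)/\sqrt{n_{ta}}+\sqrt{9\ln(2/\delta)/(2n_{ta})}$ as a standard uniform-convergence result of \cite{bartlett2002rademacher} for the fixed class $\mathcal{F}_{ta}\circ\hat h$, and handles the approximation term by exactly your one-line rearrangement of the diversity definition at $h=\hat h$ followed by replacing the source infimum with its value at $\hat{\vect f}_{so}$. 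The only difference is presentational: you unpack the symmetrization/contraction/McDiarmid chain that the paper leaves inside the cited theorem.
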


The first term in Theorem \ref{thm:approx_error1} can be upper bounded using the standard excess error bound of Gaussian complexity. The benefit of representation learning is due to the decrease in the third term from $\hat{\mathfrak{G}}_{n_{t a}}(\mathcal{F}_{t a} \circ \mathcal{H})/\sqrt{n_{ta}}$ without representation learning to $\hat{\mathfrak{G}}_{n_{t a}}(\mathcal{F}_{t a} \circ \hat{h})/\sqrt{n_{ta}}$ in our case. For the problem with complicated representations, the former term can be extremely larger than the later one.

In the rest of the paper, we discuss when we can bound $(\nu, \mu)$, for nonlinear and nonidentical $\mathcal{F}_{so}$ and $\mathcal{F}_{ta}$.
\section{Negative transfer when source tasks are more complex}
\label{sec:negative_res}
Before introducing the cases that allow transfer, we first look at a case where transfer is impossible. 

Let the source task use a linear mapping following the shared representation and the target task directly learns the representation. In other words, $f_{ta}^*$ is identical mapping and known to the learner. We further consider $\mathcal{F}_{so} = \{z \mapsto w^T z: w \in \mathbb{R}^p\}$ and $\mathcal{H} = \{x\mapsto Hx : H \in \mathbb{R}^{p \times d}\}$. The interesting case is when $p \ll d$. Let the optimal representation be $H^*$ and the true prediction function for each source task be $w^*_1, \dots, w^*_T$. In the best scenario, we assume that there is no noise in the source tasks and each source task collects as many samples as possible, such that we will have an accurate estimation on each $w_t^{*T}H^* \in \mathbb{R}^{p \times d}$ which we denote by $W_t^*$.

However, the hypothesis class given the information from source tasks are 
$$
    \{H \in \mathbb{R}^{p \times d}: \exists w \in \mathbb{R}^p,  w^TH = W_t \text{ for all } t = 1, \dots, T\}.
$$

As $H^*$ is in the above class, any $QH^*$ for some rotation matrix $Q \in \mathbb{R}^{p \times p}$ is also in the class. In other words, a non-reducible error of learnt representation is $\max_{ Q}\|H^* - QH^*\|_2^2$ in the worst case.

More generally, we give a condition for negative transfer. Consider a single source task $f_{so}^*$. Let $
    \mathcal{H}_{so}^* \coloneqq \argmin_{h \in \mathcal{H}} \min_{f\in \mathcal{F}_{so}} \mathcal{E}_{so}(f, h)
$ and $\mathcal{H}_{ta}^* \coloneqq \argmin_{h \in \mathcal{H}} \min_{f \in \mathcal{F}_{ta}} \mathcal{E}_{ta}(f, h)$. In fact, if one hopes to get a representation learning benefit with zero bias ($\mu = 0$), we need all $h \in \mathcal{H}_{so}^*$ to satisfy $\inf_{f \in \mathcal{F}_{ta}}\mathcal{E}_{ta}(f, h) = 0$, i.e. any representation that is optimal in the source task is optimal in the target task as well. Equivalently, we will need $\mathcal{H}_{so}^* \subset \mathcal{H}_{ta}^*$. As shown in Proposition \ref{prop:neg}, the definition of \textit{transferable} captures the case well.

\begin{prop}
\label{prop:neg}
If there exists a $h' \in \mathcal{H}^*_{so}$ such that $h' \notin \mathcal{H}_{ta}^*$, then $\min_{f_{ta}}\mathcal{E}_{ta}(f_{ta}, h') > 0$. Furthermore, there is no $\nu < \infty$, such that $f^*_{so}$ is $(\nu, 0)$-transferable to $f^*_{ta}$. 
\end{prop}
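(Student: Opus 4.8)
The plan is to exploit the fact that the excess error $\mathcal{E}(f,h\mid\cdot)$ is by definition nonnegative and attains its joint minimum value $0$ over $(f,h)$, so that both argmin sets $\mathcal{H}_{so}^*$ and $\mathcal{H}_{ta}^*$ are exactly the zero-level sets of the map $h \mapsto \inf_f \mathcal{E}(f,h)$. This single observation drives both parts of the statement.

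For the first claim, I would first record that for every $h$ one has $\inf_{f}\mathcal{E}_{ta}(f,h)\ge 0$, since $\mathcal{E}_{ta}(f,h)=R(f,h\mid f^*_{ta})-\min_{f',h'}R(f',h'\mid f^*_{ta})$, and moreover $\min_h \inf_f \mathcal{E}_{ta}(f,h)=0$ because the subtracted term is precisely the joint minimum. Hence $\mathcal{H}_{ta}^*=\{h:\inf_f\mathcal{E}_{ta}(f,h)=0\}$. Since $h'\notin\mathcal{H}_{ta}^*$, its value cannot equal the minimum $0$, and being nonnegative it must be strictly positive, giving $\min_{f_{ta}}\mathcal{E}_{ta}(f_{ta},h')>0$ as required.

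For the second (negative transfer) claim, the same reasoning applied to the source task shows that $h'\in\mathcal{H}_{so}^*$ forces $\inf_{f}\mathcal{E}_{so}(f,h')=0$. I would then argue by contradiction: suppose $f^*_{so}$ were $(\nu,0)$-transferable for some finite $\nu$. Specializing Definition~\ref{defn:transferable} to a single source task ($T=1$) with $\mu=0$, and clearing the denominator to sidestep division by zero, the condition is equivalent to $\inf_{f}\mathcal{E}_{ta}(f,h)\le \nu\,\inf_{f}\mathcal{E}_{so}(f,h)$ for every $h\in\mathcal{H}$. Evaluating this at $h=h'$ yields $0<\min_{f_{ta}}\mathcal{E}_{ta}(f_{ta},h')\le \nu\cdot 0=0$, a contradiction, so no finite $\nu$ can exist.

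The proof is essentially a direct unwinding of the definitions, and I do not anticipate a genuinely hard step. The only point requiring care is that the transferability ratio in Definition~\ref{defn:transferable} has a denominator that vanishes at $h'$, so I would present the argument in the cross-multiplied product form rather than literally taking a supremum of a ratio with a zero denominator. The one structural fact doing all the real work is that excess error is measured relative to the joint minimum over $(f,h)$, which pins the minimal value to $0$ and identifies each argmin set with its zero-level set.
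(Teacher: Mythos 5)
Your proof is correct and takes essentially the same route as the paper's own (very terse) proof, which simply notes the first claim is by definition and that plugging $h'$ into Definition~1 forces $\nu = \infty$. Your identification of the argmin sets with zero-level sets of $h \mapsto \inf_f \mathcal{E}(f,h)$ and your cross-multiplied handling of the vanishing denominator are just careful spellings-out of that same argument, not a different method.
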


\begin{proof}
The first statement is by definition. Plugging $g'$ into the Definition \ref{defn:transferable}, we will have $\nu = \infty$ 
\end{proof}

The negative transfer happens when the optimal representation for source tasks may not be optimal for the target task. As the case in our linear example, this is a result of more complex $\mathcal{F}_{so}$, which allows more flexibility to reduce errors. This inspires us to consider the opposite case where $\mathcal{F}_{ta}$ is more complex than $\mathcal{F}_{so}$.
\section{Source task as a representation}
\label{sec:sour_repre}
Before discussing more general settings, we first consider a single source task, which we refer to $so$. Assume that the source task itself is a representation of the target task. Equivalently, the source task has a known prediction function $f^*_{so}(x) = x$. This is a commonly-used framework when we decompose a complex task into several simple tasks and use the output of simple tasks as the input of a higher-level task.

A widely-used transfer learning method, called offset learning, which assumes $f^*_{ta}(x) = f^*_{so}(x) + w^*_{ta}(x)$ for some offset function $w^*_{ta}$, also applies here. The offset method enjoys its benefits when $w_{ta}$ has low complexity and can be learnt with few samples. It is worth mentioning that our setting covers a more general setting in \cite{du2017hypothesis}, which assumes $f^*_{ta}(x) = G(f^*_{so}(x), w^*_{ta}(x))$ for some known transformation function $G$ and unknown $w_{ta}^*$.

We show that a simple Lipschitz condition on $\mathcal{F}_{ta}$ gives us a bounded transfer-component.

\begin{aspt}[Lipschitz assumption]
\label{aspt:cont_tar}
Any $f_{ta} \in \mathcal{F}_{ta}$ is $L$-Lipschitz with respect to $L_2$ distance.
\end{aspt}

\begin{thm}
\label{thm:source_as_rep}
If Assumption \ref{aspt:cont_tar} holds, task $f_{so}^*$ is $(L, 0)$-transferable to task $f_{ta}^*$ and we have with a high probability,
$$
    \mathcal{E}_{ta}(\hat f_{ta}, \hat h) = \tilde{\mathcal{O}}\left(\frac{L\hat{\mathfrak{G}}_{n_{so}}(\mathcal{H})}{\sqrt{n_{so}}} + \frac{\hat{\mathfrak{G}}_{n_{ta}}(\mathcal{F}_{ta} \circ \hat h)}{\sqrt{n_{ta}}}\right).
$$
\end{thm}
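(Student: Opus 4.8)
The plan is to certify transferability by exhibiting a concrete near-optimal target predictor, and then hand the resulting transfer constant to Theorem~\ref{thm:approx_error1}. First I would simplify the denominator of Definition~\ref{defn:transferable}. Since the source prediction function is the \emph{known} identity, $\mathcal{F}_{so}$ plays no role in the infimum, and under square loss the noise cancels in the excess risk, so for any representation $h$ the source excess error collapses to the population discrepancy $\mathcal{E}_{so}(f^*_{so}, h) = \E_X\|h(X) - h^*(X)\|_2^2$. For the numerator I would upper bound $\inf_{f\in\mathcal{F}_{ta}}\mathcal{E}_{ta}(f,h)$ by plugging in the feasible candidate $f = f^*_{ta}\in\mathcal{F}_{ta}$, which gives $\inf_{f}\mathcal{E}_{ta}(f,h) \le \E_X\|f^*_{ta}(h(X)) - f^*_{ta}(h^*(X))\|_2^2$. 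Applying Assumption~\ref{aspt:cont_tar} pointwise, $\|f^*_{ta}(h(X)) - f^*_{ta}(h^*(X))\|_2 \le L\,\|h(X)-h^*(X)\|_2$, and taking expectations transfers the representation discrepancy to the target at the cost of the Lipschitz factor, establishing the $(L,0)$-transferability claim.

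A key observation is that the feasible choice $f^*_{ta}$ and the Lipschitz bound are available for \emph{every} target $f^*_{ta}\in\mathcal{F}_{ta}$ with the same constant, so the argument is uniform in the target and in fact upgrades transferability to $(L,0)$-\emph{diversity} over $\mathcal{F}_{ta}$, which is exactly the hypothesis required by Theorem~\ref{thm:approx_error1}. Invoking that theorem with $\nu = L$ and $\mu = 0$ yields
\[
\mathcal{E}_{ta}(\hat f_{ta}, \hat h) \le L\,\mathcal{E}_{so}(\hat{\vect f}_{so}, \hat h) + \frac{\sqrt{2\pi}\,\hat{\mathfrak{G}}_{n_{ta}}(\mathcal{F}_{ta}\circ\hat h)}{\sqrt{n_{ta}}} + \sqrt{\frac{9\ln(2/\delta)}{2 n_{ta}}}.
\]
The middle term is already the second term of the target bound, and the additive $O(\sqrt{\ln(1/\delta)/n_{ta}})$ concentration term is absorbed into the $\tilde{\mathcal{O}}$ notation.

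It remains to control $\mathcal{E}_{so}(\hat{\vect f}_{so}, \hat h)$, the excess risk of the representation produced by ERM in the training phase. Because the source prediction function is the fixed identity, the source hypothesis class is simply $\mathcal{F}_{so}\circ\mathcal{H} = \mathcal{H}$, so a standard Gaussian-complexity excess-risk bound (the source-task generalization guarantee in the spirit of the first term of Theorem~\ref{thm:approx_error1}) gives $\mathcal{E}_{so}(\hat{\vect f}_{so}, \hat h) = \tilde{\mathcal{O}}\!\left(\hat{\mathfrak{G}}_{n_{so}}(\mathcal{H})/\sqrt{n_{so}}\right)$ with high probability. Multiplying through by the factor $L$ produces the first term $L\,\hat{\mathfrak{G}}_{n_{so}}(\mathcal{H})/\sqrt{n_{so}}$, and combining with the display above completes the proof.

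The main obstacle is bookkeeping at two points rather than any deep idea. First, the Lipschitz step is quadratic under square loss (the feasible-choice bound is in terms of squared $L_2$ distance), so the exact constant multiplying the source discrepancy must be tracked carefully to land the stated $(L,0)$ form. Second, invoking the source excess-risk bound requires a boundedness hypothesis on the composed class $\mathcal{F}_{so}\circ\mathcal{H}$ (the $B$-boundedness flagged elsewhere in the draft) so that the underlying concentration argument applies; once these are in place the remainder is the routine assembly of the Gaussian-complexity machinery.
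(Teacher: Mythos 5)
Your proposal follows essentially the same route as the paper's own proof: certify $(L,0)$-transferability by plugging the feasible candidate $f^*_{ta}$ into the target excess risk, apply the Lipschitz assumption to reduce it to the source discrepancy $\E_X\|h(X)-h^*(X)\|_2^2$, bound the source excess risk of $\hat h$ by the standard Gaussian-complexity guarantee, and assemble via the decomposition in Theorem~\ref{thm:approx_error1}. Your closing remark about the quadratic Lipschitz step is well taken --- the paper's own display~(\ref{equ:lem1_1}) writes $L$ where a literal application of Assumption~\ref{aspt:cont_tar} yields $L^2$ --- but this affects only the constant and not the structure of the argument.
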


Theorem \ref{thm:source_as_rep} bounds the generalization error of two terms. The first term that scales with $1/\sqrt{n_{so}}$ only depends on the complexity of $\mathcal{H}$. Though the second term scales with $1/\sqrt{n_{ta}}$, it is easy to see that $\hat{\mathfrak{G}}_{n_{ta}}(\mathcal{F}_{ta} \circ \hat h) = \hat{\mathfrak{G}}_{n_{ta}}(\mathcal{F}_{ta}) \ll \hat{\mathfrak{G}}_{n_{ta}}(\mathcal{F}_{ta} \circ \mathcal{H})$ with the dataset $\{\hat h(x_{ta, i})\}_{i = 1}^{n_{ta}}$.
\subsection{General case}
Previously, we assume that a single source task is a representation of the target task. Now we consider a more general case: there exist  functions in the source prediction space that can be used as representations of the target task. Formally, we consider $\mathcal{F}_{ta} = \mathcal{F}'_{ta} \circ (\mathcal{F}_{so}^{\otimes m})$ for some $m > 0$ and some target-specific function space $\mathcal{F}'_{ta}: \mathcal{Y}_{so}^{\otimes m} \mapsto \mathcal{Y}_{ta}$. Note that $\mathcal{F}_{ta}$ is strictly larger than $\mathcal{F}_{so}$ when $m = 1$ and the identical mapping $x\mapsto x \in \mathcal{F}_{ta}'$. In practice, ResNet \citep{tai2017image} satisfies the above property.


\begin{aspt} 
\label{aspt:2}
Assume any $f'_{ta} \in \mathcal{F}'_{ta}$ is $L'$-Lipschitz with respect to $L_2$ distance.
\end{aspt}

\begin{thm}
\label{thm:general_case}
If Assumption \ref{aspt:2} holds and the source tasks are $(\nu, \mu)$-diverse over its own space $\mathcal{F}_{so}$, then we have 
$$
    \mathcal{E}_{t a}\left(\hat{f}_{t a}, \hat{h}\right)=\tilde{\mathcal{O}}\left(L'm \left(\frac{\nu \hat{\mathfrak{G}}_{Tn_{so}}(\mathcal{F}_{so}^{\otimes T} \circ \mathcal{H})}{\sqrt{Tn_{so}}} + \mu\right) +\frac{\hat{\mathfrak{G}}_{n_{t a}}\left(\mathcal{F}_{ta} \circ \hat{h}\right)}{\sqrt{n_{t a}}}\right).
$$
\end{thm}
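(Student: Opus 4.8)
The plan is to reduce everything to Theorem~\ref{thm:approx_error1}. That theorem already converts a diversity guarantee \emph{over the target class} into a generalization bound, so it suffices to show two things: first, that $(\nu,\mu)$-diversity over $\mathcal{F}_{so}$ \emph{upgrades} to $(\nu'',\mu'')$-diversity over $\mathcal{F}_{ta}=\mathcal{F}'_{ta}\circ(\mathcal{F}_{so}^{\otimes m})$ with $\nu''=L'm\,\nu$ and $\mu''=L'm\,\mu$; and second, to substitute a standard source-side excess-risk bound for $\mathcal{E}_{so}(\hat{\vect{f}}_{so},\hat h)$ into the resulting inequality.

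The heart of the argument is the upgrade lemma: for every $h\in\mathcal{H}$ and every $f_{ta}^*\in\mathcal{F}_{ta}$,
$$
\inf_{f\in\mathcal{F}_{ta}}\mathcal{E}_{ta}(f,h\mid f_{ta}^*)\;\le\; L'\sum_{j=1}^m \inf_{g\in\mathcal{F}_{so}}\mathcal{E}(g,h\mid f_j^*).
$$
To prove it I would use the product structure to write the true target function as $f_{ta}^*=f'_{ta}\circ(f_1^*,\dots,f_m^*)$ with $f'_{ta}\in\mathcal{F}'_{ta}$ and, crucially, each inner component $f_j^*\in\mathcal{F}_{so}$ — the space in which diversity is assumed. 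As a feasible witness I keep the \emph{same} outer map $f'_{ta}$ and replace each $f_j^*$ by an arbitrary $g_j\in\mathcal{F}_{so}$, so that $f=f'_{ta}\circ(g_1,\dots,g_m)\in\mathcal{F}_{ta}$. The target excess error is then the $L_2$ distance between $f'_{ta}(g_1\circ h,\dots,g_m\circ h)$ and $f'_{ta}(f_1^*\circ h^*,\dots,f_m^*\circ h^*)$, and Assumption~\ref{aspt:2} bounds this by $L'$ times the distance of the inner arguments, which decouples across the $m$ coordinates into $\sum_j\mathcal{E}(g_j,h\mid f_j^*)$. Taking the infimum over each $g_j$ independently — possible precisely because the outer map is held fixed — yields the displayed inequality.

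Applying source-side diversity coordinatewise then finishes the upgrade: each term obeys $\inf_{g}\mathcal{E}(g,h\mid f_j^*)\le \nu\,\inf_{\vect f}\mathcal{E}_{so}(\vect f,h)+\mu$ uniformly in $h$ and $f_j^*$, so summing over $j$ and multiplying by $L'$ gives
$$
\sup_{f_{ta}^*}\sup_h\frac{\inf_{f\in\mathcal{F}_{ta}}\mathcal{E}_{ta}(f,h\mid f_{ta}^*)}{\inf_{\vect f}\mathcal{E}_{so}(\vect f,h)+\mu/\nu}\;\le\; L'm\,\nu,
$$
i.e.\ the source tasks are $(L'm\nu,\,L'm\mu)$-diverse over $\mathcal{F}_{ta}$. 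Feeding these constants into Theorem~\ref{thm:approx_error1} bounds $\mathcal{E}_{ta}(\hat f_{ta},\hat h)$ by $L'm\nu\,\mathcal{E}_{so}(\hat{\vect f}_{so},\hat h)+L'm\mu$ plus the target complexity term $\hat{\mathfrak{G}}_{n_{ta}}(\mathcal{F}_{ta}\circ\hat h)/\sqrt{n_{ta}}$ and a logarithmic confidence term. I would then control $\mathcal{E}_{so}(\hat{\vect f}_{so},\hat h)$ with the standard multi-task ERM bound of \cite{tripuraneni2020theory}, namely $\mathcal{E}_{so}(\hat{\vect f}_{so},\hat h)=\tilde{\mathcal{O}}(\hat{\mathfrak{G}}_{Tn_{so}}(\mathcal{F}_{so}^{\otimes T}\circ\mathcal{H})/\sqrt{Tn_{so}})$; substituting and collecting terms produces the stated bound.

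The main obstacle is the upgrade lemma, and within it the honest bookkeeping of the Lipschitz step: one must check that $L'$-Lipschitzness with respect to the $L_2$ distance on $\mathcal{Y}_{so}^{\otimes m}$ converts the target excess error into a \emph{sum} (rather than a max or product) of the per-component source-type excess errors, and must keep track of whether the resulting constant is $L'$ or $(L')^2$ depending on whether error is measured by the $L_2$ distance or its square. The second, more routine, care-point is that the coordinatewise diversity bound must hold \emph{with the same $h$} in numerator and denominator and uniformly over \emph{all} inner targets $f_j^*\in\mathcal{F}_{so}$ — exactly what the $\sup$ over $f_{ta}^*\in\mathcal{F}_{so}$ in the definition of diversity over $\mathcal{F}_{so}$ supplies, and the reason the structural hypothesis $\mathcal{F}_{ta}=\mathcal{F}'_{ta}\circ(\mathcal{F}_{so}^{\otimes m})$ with inner factors drawn from $\mathcal{F}_{so}$ is indispensable.
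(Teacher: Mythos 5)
Your proposal is correct and follows essentially the same route as the paper: decompose $f_{ta}^*$ into the outer map and the $m$ inner components $f_j^*\in\mathcal{F}_{so}$, treat each inner component as a new task to which the source tasks are $(\nu,\mu)$-transferable, use the $L_2$ loss to decouple the $m$ coordinates into a sum, apply the Lipschitz bound on the outer map, and plug the result into Theorem~\ref{thm:approx_error1} together with the standard multi-task ERM bound on $\mathcal{E}_{so}(\hat{\vect f}_{so},\hat h)$. Your packaging of this as a uniform ``diversity upgrade'' over all $h$ and $f_{ta}^*$ (rather than arguing directly at $\hat h$ as the paper does) is an equivalent reorganization, and the $L'$ versus $(L')^2$ bookkeeping issue you flag is present in the paper's own proof as well.
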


It is shown in \cite{tripuraneni2020theory} that $\hat{\mathfrak{G}}_{{\sum_{s} n_s}}(\mathcal{F}_{so}^{\otimes S} \circ \mathcal{H})$ can be bounded by $\tilde{\mathcal{O}}(\hat{\mathfrak{G}}_{Tn_{so}}(\mathcal{H}) + \sqrt{T}\hat{\mathfrak{G}}_{n_{so}}(\mathcal{F}_{so}))$. Thus, the first term scales with 
$
    {\hat{\mathfrak{G}}_{Tn_{so}}(\mathcal{H})}/{\sqrt{Tn_{so}}} + {\hat{\mathfrak{G}}_{n_{so}}(\mathcal{F}_{so})}/{\sqrt{n_{so}}}.
$
Again the common part shared by all the tasks decreases with $\sqrt{Tn_{so}}$, while the task-specific part scales with $\sqrt{n_{so}}$ or $\sqrt{n_{ta}}$.

\subsection{Applications to deep neural networks}

Theorem \ref{thm:general_case} has a broad range of applications. In the rest of the section, we discuss its application to deep neural networks, where the source tasks are transferred to a target task with a deeper network.

We first introduce the setting for deep neural network prediction function. We consider the regression problem with $\mathcal{Y}_{so} = \mathcal{Y}_{ta} = \mathbb{R}$ and the representation space $\mathcal{Z} \subset \mathbb{R}^{p}$. A depth-$K$ vector-valued neural network is denoted by 
$$
    f(\mathbf{x})=\sigma\left(\mathbf{W}_{K}\left(\sigma\left(\ldots \sigma\left(\mathbf{W}_{1} \mathbf{x}\right)\right)\right)\right),
$$
where each $\mathbf{W}_k$ is a parametric matrix of layer $k$ and $\sigma$ is the activation function. For simplicity, we let $\mathbf{W}_k \in \mathbb{R}^{p \times p}$ for $k = 1, \dots K$. The class of all depth-$K$ neural network is denoted by $\mathcal{M}_K$. We denote the linear class by $\mathcal{L} = \{x \mapsto \alpha^Tx + \beta: \forall \alpha \in \mathbb{R}^p, \|\alpha\|_2 \leq M(\alpha) \}$ for some $M(\alpha) > 0$. We also assume 
$$
    \max\{\|W_k\|_{\infty}, \|W_k\|_{2}\| \} \leq M(k).
$$
where $\|\cdot\|_{\infty}$ and $\|\cdot\|_2$ are the infinity norm and spectral norm. We assume any $z \in \mathcal{Z}, \|z\|_{\infty} \leq D_{Z}$. 

\paragraph{Deeper network for the target task.} We now consider the source task with prediction function of a depth-$K_{so}$ neural network followed by a linear mapping and target task with depth-$K_{ta}$ neural network. We let $K_{ta} > K_{so}$. Then we have 
$$
    \mathcal{F}_{ta} = \mathcal{L} \circ \mathcal{M}_{K_{ta} - K_{so}} \circ \mathcal{M}_{K_{so}} \text{ and } \mathcal{F}_{so} = \mathcal{L} \circ \mathcal{M}_{K_{so}}.
$$

Using the fact that $\mathcal{M}_1 = \sigma(\mathcal{L}^{\otimes p})$, we can write $\mathcal{F}_{ta}$ as $\mathcal{L}\circ \mathcal{M}_{K_{ta} - K_{so} - 1} \circ \sigma \circ (\mathcal{F}_{so}^{\otimes p})$. Thus, we can apply Theorem \ref{thm:general_case} and the standard Gaussian complexity bound for DNN models, which gives us Corollary \ref{cor:deep_application}.

\begin{cor}
\label{cor:deep_application}
Let $\mathcal{F}_{so}$ be depth-$K_{so}$ neural network and $\mathcal{F}_{ta}$ be depth-$K_{ta}$ neural network. If source tasks are $(\nu, 0)$-diverse over $\mathcal{F}_{so}$, we have 
\begin{align*}
    &\mathcal{E}_{t a}\left(\hat{f}_{t a}, \hat{h}\right)= \\
 &\tilde{\mathcal{O}}\left({p\nu M(\alpha) \Pi_{k=1}^{K_{ta}}M(k)}\left( \frac{\hat{\mathfrak{G}}_{Tn_{so}}(\mathcal{H})}{\sqrt{Tn_{so}}} + \frac{D_{Z}\sqrt{K_{so}}}{\sqrt{n_{so}}}\right)+\frac{ D_{Z} \sqrt{K_{ta}} \cdot M(\alpha) \Pi_{k = 1}^{K_{ta}}M(k)}{\sqrt{n_{ta}}}\right). \numberthis \label{equ:dnn_bound}
\end{align*}

\end{cor}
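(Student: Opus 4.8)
The plan is to obtain the Corollary as a direct instantiation of Theorem~\ref{thm:general_case}, after verifying its structural and Lipschitz hypotheses for the DNN classes at hand. The first step is the algebraic reduction hinted at in the text: using the identity $\mathcal{M}_1 = \sigma(\mathcal{L}^{\otimes p})$, I would peel off the bottom activation layer of the target network and write $\mathcal{F}_{ta} = \mathcal{L} \circ \mathcal{M}_{K_{ta}-K_{so}-1} \circ \sigma \circ (\mathcal{F}_{so}^{\otimes p})$. This exhibits $\mathcal{F}_{ta}$ in exactly the form $\mathcal{F}'_{ta} \circ (\mathcal{F}_{so}^{\otimes m})$ required by Theorem~\ref{thm:general_case}, with $m = p$ and target-specific head $\mathcal{F}'_{ta} = \mathcal{L} \circ \mathcal{M}_{K_{ta}-K_{so}-1} \circ \sigma$.

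Second, I would verify Assumption~\ref{aspt:2} by bounding the Lipschitz constant $L'$ of $\mathcal{F}'_{ta}$. Since $\mathcal{F}'_{ta}$ is a composition of the outer linear map (whose weight has $\ell_2$ norm at most $M(\alpha)$), the $K_{ta}-K_{so}-1$ hidden weight matrices (each of spectral norm at most $M(k)$), and the $1$-Lipschitz activation $\sigma$, the Lipschitz constants multiply, giving $L' \le M(\alpha)\prod_{k=K_{so}+2}^{K_{ta}} M(k) \le M(\alpha)\prod_{k=1}^{K_{ta}} M(k)$, the last inequality being a harmless over-count. With $L'$ in hand and $\mu = 0$ (source diversity is assumed with zero bias), plugging $m = p$ into Theorem~\ref{thm:general_case} yields a bound of the form $\tilde{\mathcal{O}}\big(L' p\,\nu\,\hat{\mathfrak{G}}_{Tn_{so}}(\mathcal{F}_{so}^{\otimes T}\circ\mathcal{H})/\sqrt{Tn_{so}} + \hat{\mathfrak{G}}_{n_{ta}}(\mathcal{F}_{ta}\circ\hat h)/\sqrt{n_{ta}}\big)$.

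Third, I would convert the two remaining Gaussian-complexity terms into the explicit DNN quantities in \eqref{equ:dnn_bound}. For the source term I would invoke the decomposition recalled just after Theorem~\ref{thm:general_case}, namely $\hat{\mathfrak{G}}_{Tn_{so}}(\mathcal{F}_{so}^{\otimes T}\circ\mathcal{H}) \lesssim \hat{\mathfrak{G}}_{Tn_{so}}(\mathcal{H}) + \sqrt{T}\,\hat{\mathfrak{G}}_{n_{so}}(\mathcal{F}_{so})$; dividing by $\sqrt{Tn_{so}}$ produces the $\hat{\mathfrak{G}}_{Tn_{so}}(\mathcal{H})/\sqrt{Tn_{so}}$ term and a $\hat{\mathfrak{G}}_{n_{so}}(\mathcal{F}_{so})/\sqrt{n_{so}}$ term. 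I would then apply the standard peeling/contraction bound for depth-$K$ networks with spectral-norm-bounded weights, giving $\hat{\mathfrak{G}}_{n_{so}}(\mathcal{F}_{so}) = \tilde{\mathcal{O}}(D_{Z}\sqrt{K_{so}}\cdot(\text{weight products}))$ — using $\|z\|_\infty \le D_{Z}$ on the feature data — and likewise $\hat{\mathfrak{G}}_{n_{ta}}(\mathcal{F}_{ta}\circ\hat h) = \tilde{\mathcal{O}}(D_{Z}\sqrt{K_{ta}}\,M(\alpha)\prod_{k=1}^{K_{ta}}M(k))$, noting that $\hat{\mathfrak{G}}_{n_{ta}}(\mathcal{F}_{ta}\circ\hat h)$ is just the complexity of $\mathcal{F}_{ta}$ evaluated on the fixed representation outputs. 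Collecting the weight-norm products into the shared prefactor $p\,\nu\,M(\alpha)\prod_{k=1}^{K_{ta}}M(k)$ gives \eqref{equ:dnn_bound}.

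The main obstacle I anticipate is the bookkeeping in this last step rather than any conceptual difficulty: one must carefully track how the Lipschitz prefactor $L'm$, the spectral-norm products appearing in the DNN Gaussian-complexity bounds for $\mathcal{F}_{so}$ and $\mathcal{F}_{ta}$, and the $\sqrt{K}$ depth factors combine, while staying consistent with the $1/\sqrt{N}$ normalization convention for $\hat{\mathfrak{G}}_N$ and the $\ell_\infty$-to-$\ell_2$ conversion of the data and feature norms. A secondary point of care is confirming that the chosen DNN complexity bound is compatible with this non-standard normalization, so that the stated $\sqrt{K_{so}}$ and $\sqrt{K_{ta}}$ dependence and the product $\prod_{k=1}^{K_{ta}}M(k)$ emerge with the correct powers.
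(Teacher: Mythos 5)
Your plan follows the paper's proof essentially step for step: rewrite $\mathcal{F}_{ta}$ as $\mathcal{L}\circ\mathcal{M}_{K_{ta}-K_{so}-1}\circ(\mathcal{F}_{so}^{\otimes p})$, bound $L'$ by the product of spectral norms, instantiate Theorem~\ref{thm:general_case} with $m=p$ and $\mu=0$, and control the two Gaussian complexities via the chain-rule decomposition (Theorem~\ref{thm:gau_chain}) and the depth-$K$ network bound (Theorem~\ref{thm:Rad_DNN}). The only detail to watch is that the rigorous form of the decomposition carries the factor $L(\mathcal{F}_{so}) = M(\alpha)\Pi_{k=1}^{K_{so}}M(k)$ on the $\hat{\mathfrak{G}}_{Tn_{so}}(\mathcal{H})$ term --- this is precisely where the full product $\Pi_{k=1}^{K_{ta}}M(k)$ in the stated prefactor comes from --- which your informal statement of the decomposition omits but your final bookkeeping implicitly restores.
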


Note that the terms that scales with $1/\sqrt{n_{so}}$ and $1/\sqrt{n_{ta}}$ have similar coefficients $M(\alpha) \Pi_{k = 1}^{K_{ta}}M(k)$, which do not depends on the complexity of $\mathcal{H}$. The term that depends on $\hat{\mathfrak{G}}_{T n_{so}}(\mathcal{H})$ scales with $1/\sqrt{Tn_{so}}$ as we expected. 

\section{Diversity of non-linear function classes}
\label{sec:div_general}

While Corollary \ref{cor:deep_application} considers the nonlinear DNN prediction function space under a diversity condition, it is not clearly understood how diversity can be achieved for nonlinear spaces. In this section, we first discuss a specific non-linear prediction function space and show a fundamental barrier to achieving diversity. Then we extend our result to general function classes by connecting eluder dimension and diversity. We end the section with positive results for achieving diversity under a generalized rank condition. 

We consider a subset of depth-1 neural networks with ReLu activation function: $\mathcal{F} = \{x \mapsto [\langle x, w \rangle - (1-\epsilon/2)]_{+}: \|x\|_2 \leq 1, \|w\|\leq 1\}$ for some $\epsilon > 0$. Our lower bound construction is inspired by the similar construction in Theorem 5.1 of \cite{dong2021provable}.
\begin{thm}
\label{thm:ReLu_hard_case}
Let $\mathcal{T} = \{f_1, \dots, f_T\}$ be any set of depth-1 neural networks with ReLu activation in $\mathcal{F}$. For any $\epsilon > 0$, if $T \leq 2^{d \log(1/\epsilon) - 1}$, there exists some representation $h^*, h' \in \mathcal{H}$, some distribution $P_{X}$ and a target function $f_{ta}^* \in \mathcal{F}$, such that
$$
    \inf_{\vect{f} \in \mathcal{F}}\mathcal{E}_{so}(\vect{f}, h') = 0,\quad \text{while } \inf_{{f} \in \mathcal{F}}\mathcal{E}_{ta}(f, h') = \epsilon^2/32.
$$
\end{thm}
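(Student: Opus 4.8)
The plan is to exhibit, for an arbitrary family $\mathcal{T}$ of $T \le 2^{d\log(1/\epsilon)-1}$ source bumps, a single ``collapsing'' representation $h'$ that is optimal for every source task yet provably suboptimal for a carefully chosen target bump. This is exactly the negative-transfer mechanism isolated in Proposition \ref{prop:neg}: I will arrange that $h'$ lies in $\mathcal{H}_{so}^*$ (zero source excess error) but not in $\mathcal{H}_{ta}^*$ (positive target excess error), and then read off the quantitative gap $\epsilon^2/32$ directly rather than merely concluding $\nu=\infty$.

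First, the reduction to a two-point obstruction. Under the square loss with noiseless, realizable data, $\mathcal{E}_{so}(\vect{f}, h) = \frac{1}{T}\sum_t \E_X[(f_t \circ h(X) - f_t^* \circ h^*(X))^2]$ and similarly for the target, so $\inf_{\vect{f}}\mathcal{E}_{so}(\vect{f}, h')=0$ holds as soon as, for each source $t$, some bump reproduces the true source labels through $h'$. I will take $P_X$ supported on two feature-carrying inputs $x_a, x_b$ (plus auxiliary anchor mass), let $h^*$ send them to two sphere points $z_a, z_b$, and let $h'$ collapse both to a common point $\zeta$. If all source bumps vanish at $z_a, z_b, \zeta$, then $\tilde f_t = f_t^*$ still reproduces the (zero) source labels through $h'$, giving zero source excess error; meanwhile, choosing the target so that $f_{ta}^*(z_a) = \epsilon/2$ while $f_{ta}^*(z_b)=0$, the collapse forces any target predictor to emit a single value on $\{z_a,z_b\}$, and minimizing the two-point square loss over the appropriately weighted mass yields exactly the constant $\epsilon^2/32$.

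Everything therefore reduces to the combinatorial core: given source directions $w_1,\dots,w_T$, find a unit target direction $w_{ta}$ with $\langle w_{ta}, w_t\rangle \le 1-\epsilon/2$ for all $t$ (so that every source bump vanishes at $z_a = w_{ta}$ while $f_{ta}^*(z_a)=\epsilon/2$), together with a second point $z_b$ on which all sources and the target also vanish. I will build a maximal family $W$ of mutually $\epsilon$-resolvable directions, i.e. with pairwise inner products below the bump threshold, and argue: (i) once $W$ is separated at twice the bump scale, a triangle inequality on spherical angles shows each source bump can ``block'' at most one member of $W$; and (ii) such a family still has size $2^{d\log(1/\epsilon)}$, the packing/eluder-independence count governing this ReLu class. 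Then $T \le 2^{d\log(1/\epsilon)-1} < |W|$ leaves an unblocked member to serve as $w_{ta}$, with any further separated direction serving as $z_b$.

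The main obstacle is step (ii): establishing that one can pack $2^{d\log(1/\epsilon)}$ mutually $\epsilon$-resolvable bump directions (equivalently, lower-bounding the eluder dimension of $\mathcal{F}$ by this quantity) and matching the exact constant in the exponent. This is the genuinely class-specific part, and is where I expect to follow the construction of Theorem 5.1 in \cite{dong2021provable}; the surrounding reduction and the $\epsilon^2/32$ computation are then routine. A secondary point is to confirm that the collapsing $h'$ and the identity-like $h^*$ both lie in whatever $\mathcal{H}$ we declare for the hard instance, which is immediate once $\mathcal{H}$ is taken rich enough to contain all maps from the finite support of $P_X$ into the unit ball.
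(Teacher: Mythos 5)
Your proposal is correct, and its combinatorial heart is exactly the paper's: a maximal family $W$ of unit directions with pairwise inner products bounded away from $1$, plus the half-angle observation that a single ReLu bump can exceed its threshold at no more than one member of $W$, so $T\le 2^{d\log(1/\epsilon)-1}$ sources must leave unblocked directions; like the paper, you defer the packing-size lower bound to the construction of \cite{dong2021provable}. Where you genuinely differ is the reduction wrapped around this core. The paper takes $h^*$ to be the \emph{constant} map onto the target's peak direction $u$ and takes $h'$ to be the identity with $P_X$ uniform over the large unblocked set $V'$: every source's true label is then identically zero, while the target must predict $\epsilon/4$ from a uniformly random point of $V'$ on which any single bump is nonzero at most once, yielding $\tfrac{|V'|-1}{|V'|}\cdot\tfrac{\epsilon^2}{16}\ge\epsilon^2/32$. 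You instead put $P_X$ on two points, let $h^*$ separate them into two unblocked directions $z_a,z_b$, and let $h'$ collapse them, forcing the target predictor to fit the label pair $\{\epsilon/2,0\}$ with a single constant. Your version is the more literal instantiation of Proposition \ref{prop:neg}, and with the mass split chosen so that $p(1-p)=1/8$ it delivers the value $\epsilon^2/32$ exactly, whereas the paper's uniform construction strictly speaking only certifies the lower bound. Two details you should pin down, neither a gap in the approach: (i) the bump threshold must be matched to the packing separation for the blocking lemma to close (the paper's own proof quietly replaces the stated $1-\epsilon/2$ by $1-\epsilon/4$ against separation $1-\epsilon$; with threshold $1-\epsilon/2$ you need separation roughly $1-2\epsilon$, costing only constants in the exponent), and (ii) with that threshold the bump's peak value is $\epsilon/2$, so the advertised constant forces the unequal weighting rather than the uniform one.
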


Theorem \ref{thm:ReLu_hard_case} implies that we need at least $\Omega({2^{d}\log(1/\epsilon)})$ source tasks to achieve diversity. Otherwise, we can always find a set of source tasks and a target task such that the generalization error in source tasks are minimized to 0 while that in target task is $\epsilon^2 / 32$. Though ReLu gives us a more intuitive result, we do show that similar lower bounds can be shown for other popular activation functions, for example, sigmoid function (see Appendix \ref{app:more_activation_funcs}).

\subsection{Lower bound using eluder dimension}
We extend our result by considering a general function space $\mathcal{F}$ and build an interesting connection between diversity and eluder dimension \citep{russo2013eluder}. We believe we are the first to notice a connection between eluder dimension and transfer learning.

Eluder dimension has been used to measure the sample complexity in the Reinforcement Learning problem. It considers the minimum number of \textit{inputs}, such that any two functions evaluated similarly at these inputs will also be similar at any other input. However, diversity considers the minimum number of functions such that any two representations with similar \textit{outputs} for these functions will also have similar output for any other functions. Thus, there is a kind of duality between eluder dimension and diversity.

We first formally define eluder dimension. Let $\mathcal{F}$ be a function space with support $\mathcal{X}$ and let $\epsilon > 0$.
\begin{defn}[($\mathcal{F}, \epsilon$)-dependence and eluder dimension (\cite{osband2014model})]
We say that $x \in \mathcal{X}$ is $(\mathcal{F}, \epsilon)$-dependent on $\{x_1, \dots, x_n\} \subset \mathcal{X}$ iff $$
    \forall f, \tilde{f} \in \mathcal{F}, \quad \sum_{i=1}^{n}\left\|f\left(x_{i}\right)-\tilde{f}\left(x_{i}\right)\right\|_{2}^{2} \leq \epsilon^{2} \Longrightarrow\|f(x)-\tilde{f}(x)\|_{2} \leq \epsilon.
$$
We say $x \in \mathcal{X}$ is $(\mathcal{F}, \epsilon)$-independent of $\{x_1, \dots, x_n\}$ iff it does not satisfy the definition of dependence.

The eluder dimension $\operatorname{dim}_E(\mathcal{F}, \epsilon)$ is the length of the longest possible sequence of elements in $\mathcal{X}$ such that for some $\epsilon' \geq \epsilon$ every element is $(\mathcal{F}, \epsilon')$-independent of its predecessors.

We slightly change the definition and let $\operatorname{dim}^{s}_E(\mathcal{F}, \epsilon)$ be the shortest sequence such that any $x \in \mathcal{X}$ is $(\mathcal{F}, \epsilon)$-dependent on the sequence. 
\end{defn}

Although $dim_{E}^s(\mathcal{F}, \epsilon) \leq dim_{E}(\mathcal{F}, \epsilon)$, it can be shown that in many regular cases, say linear case and generalized linear case, $dim_{E}(\mathcal{F}, \epsilon)$ is only larger then $dim_{E}^s(\mathcal{F}, \epsilon)$ up to a constant. 


\begin{thm}
\label{thm:eluder}
For any function class $\mathcal{F}:\mathcal{X} \mapsto \mathbb{R}$, and some $\epsilon > 0$, let $\mathcal{F}^*$ be the dual class of $\mathcal{F}$. Let $d_E = \operatorname{dim}^s_E(\mathcal{F}^*, \epsilon)$. Then for any sequence of tasks $f_1, \dots, f_{t}$, $t \leq d_E-1$, there exists a task $f_{t+1} \in \mathcal{F}$ such that for some data distribution $P_X$ and two representations $h$, $h^*$, 
$$
\frac{\inf_{f'_{t+1} \in \mathcal{F}}\E_X \|f'_{t+1}(h(X)) - f_{t+1}(h^*(X))\|_2^2}{\frac{1}{t}\inf_{f'_1, \dots, f'_{t} }\sum_{i = 1}^{t}\E_X \|f'_i(h(X)) - f_i(h^*(X))\|_2^2} \geq t/2.
$$
\end{thm}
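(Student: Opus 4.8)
The plan is to convert the failure of $f_1, \dots, f_t$ to span (in the dual eluder sense) into an explicit adversarial instance. First I would unfold what $\operatorname{dim}^{s}_E(\mathcal{F}^*, \epsilon)$ means concretely: identifying each feature point $z \in \mathcal{Z}$ with the evaluation functional $\phi_z(f) = f(z)$ acting on the dual support $\mathcal{F}$, a function $f$ is $(\mathcal{F}^*, \epsilon)$-dependent on $\{f_1, \dots, f_t\}$ exactly when, for all $z, z' \in \mathcal{Z}$, the condition $\sum_{i=1}^t |f_i(z) - f_i(z')|^2 \le \epsilon^2$ forces $|f(z) - f(z')| \le \epsilon$. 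Since $t \le d_E - 1$ is strictly below the length of the shortest spanning sequence, $\{f_1, \dots, f_t\}$ cannot render every $f \in \mathcal{F}$ dependent, so there is a witness $f_{t+1} \in \mathcal{F}$ that is $(\mathcal{F}^*, \epsilon)$-independent of it; unpacking independence produces two feature points $z, z' \in \mathcal{Z}$ with
\[
\sum_{i=1}^{t} |f_i(z) - f_i(z')|^2 \le \epsilon^2 \quad\text{and}\quad |f_{t+1}(z) - f_{t+1}(z')| > \epsilon .
\]
This $f_{t+1}$ will be the target task.

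Next I would build the distribution and the two representations so that $h$ is blind to the distinction between $z$ and $z'$ while $h^*$ is not. Concretely, take $P_X$ uniform on two input points $a, b$, set $h^*(a) = z$, $h^*(b) = z'$, and collapse the pair under $h$ via $h(a) = h(b) = z$ (any sufficiently expressive $\mathcal{H}$ admits such $h, h^*$, and the statement only asks for their existence). The effect is that for every prediction function $g$ the fitted outputs $g(h(a))$ and $g(h(b))$ are both forced to the single value $g(z)$, whereas the ground-truth outputs $f(h^*(a)) = f(z)$ and $f(h^*(b)) = f(z')$ are free to differ.

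I would then bound the two sides of the ratio separately. For the denominator, choosing $f'_i = f_i$ yields source error $\tfrac12 |f_i(z) - f_i(z')|^2$ per task, so $\inf \sum_{i=1}^t \E_X \|f'_i(h(X)) - f_i(h^*(X))\|_2^2 \le \tfrac12 \sum_{i=1}^t |f_i(z) - f_i(z')|^2 \le \epsilon^2/2$, whence the denominator is at most $\epsilon^2/(2t)$. For the numerator, every $f' \in \mathcal{F}$ must fit both $f_{t+1}(z)$ and $f_{t+1}(z')$ with the single number $c = f'(z)$, and $\min_{c}\tfrac12\bigl[(c - f_{t+1}(z))^2 + (c - f_{t+1}(z'))^2\bigr] = \tfrac14 |f_{t+1}(z) - f_{t+1}(z')|^2 > \epsilon^2/4$; since this holds for every $f'$, the numerator is at least $\epsilon^2/4$. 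Dividing gives a ratio of at least $(\epsilon^2/4)/(\epsilon^2/(2t)) = t/2$, as claimed.

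The conceptual crux, and the step I expect to require the most care, is the first one: correctly dualizing the eluder definition (swapping the roles of inputs and functions) and verifying that a length-$t$ non-spanning sequence produces an independence witness of the precise form above. Once the pair $(z, z')$ is in hand, the representation that collapses $z$ and $z'$ is exactly the gadget that keeps the sources invariant (total error $\le \epsilon^2/2$) while making the target irreducible (error $> \epsilon^2/4$), and the rest reduces to the elementary two-point least-squares estimates above. A minor point worth stating explicitly is the expressiveness assumption on $\mathcal{H}$ (equivalently, the freedom to choose $h$ and $h^*$) needed to realize the collapsing representation.
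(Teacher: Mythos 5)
Your proposal is correct and follows essentially the same route as the paper's proof: extract an independence witness $f_{t+1}$ with two feature points $z,z'$ from the fact that $t\le d_E-1$ falls short of the shortest spanning sequence of the dual class, then collapse $z$ and $z'$ under the learned representation while keeping them distinct under the true one, and finish with the same two-point least-squares bounds (denominator $\le \epsilon^2/(2t)$, numerator $\ge \epsilon^2/4$). Your version is if anything slightly more careful than the paper's, since you make the joint coupling of the two representations through a common input distribution explicit rather than specifying the marginals of $h(X)$ and $h^*(X)$ separately.
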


Theorem \ref{thm:eluder} formally describes the connections between eluder dimension and diversity. To interpret the theorem, we first discuss what are good source tasks. Any $T$ source tasks that are diverse could transfer well to a target task if the parameter $\nu$ could be bounded by some fixed value that is not increasing with $T$. For instance, in the linear case, $\nu = \mathcal{O}(d)$ no matter how large $T$ is. While for a finite function class, in the worst case, $\nu$ will increase with $T$ before $T$ reaches $|\mathcal{F}|$. Theorem \ref{thm:eluder} states that if the eluder dimension of the dual space is at least $d_E$, then $\nu$ scales with $T$ until $T$ reaches $d_E$, as if the function space $\mathcal{F}$ is discrete with $d_E$ elements.

Note that this result is consistent with what is shown in Theorem \ref{thm:general_hard_case} as eluder dimension of the class discussed above is lower bounded by $\Omega(2^{d})$ as well \cite{li2021eluder}.

\subsection{Upper bound using approximate generalized rank}
Though we showed that diversity is hard to achieve in some nonlinear function class, we point out that diversity can be easy to achieve if we restrict the target prediction function such that they can be realized by linear combinations of some known basis. \cite{tripuraneni2020theory} has shown that any source tasks $f_1, \dots, f_T \in \mathcal{F}$ are $(1/T, \mu)$-diverse over the space $\{f \in \mathcal{F}: \exists \tilde{f} \in \operatorname{conv}(f_1, \dots, f_T) \text{ such that } \sup_{z}\|f(z) - \tilde{f}(z)\|\leq \mu\}$, where $\operatorname{conv}(f_1, \dots, f_T)$ is the convex hull of $\{f_1, \dots, f_T\}$. This can be characterized by a complexity measure, called generalized rank. Generalized rank is the smallest dimension required to embed the input space such that all hypotheses in a function class can be realizable as halfspaces. Generalized rank has close connections to eluder dimension. As shown in \cite{li2021eluder}, eluder dimension can be upper bounded by generalized rank. 

\begin{defn}[Approximate generalized rank with identity activation] Let  $\mathcal{B}_d(R) \coloneqq \{x\in\mathbb{R}^d \mid \|x\|_2 \leq R\}$. The $\mu$-approximate $\operatorname{id-rk}(\mathcal{F, R})$ of a function class $\mathcal{F}:\mathcal{X} \mapsto \mathbb{R}$ at scale $R$ is the smallest dimension $d$ for which there exists mappings $\phi:\mathcal{X} \mapsto \mathcal{B}_d(1)$ and $w:\mathcal{F}\mapsto\mathcal{B}_d(R)$ such that $$
    \text{for all } (x, f) \in \mathcal{X} \times \mathcal{F}: |f(x) - \langle w(f), \phi(x) \rangle| \leq \mu.
$$
\end{defn}

\begin{prop}
\label{prop:gr}
For any $\mathcal{F}$ with $\mu$-approximate $\operatorname{id-rank}(\mathcal{F}, R) \leq d_i$ for some $R > 0$, there exists no more than $d_i$ functions, $f_1, \dots, f_{d_i}$, such that $w(f_1), \dots w(f_{d_i})$ span $\mathbb{R}^{d_{d_i}}$. Then $f_1, \dots, f_{d_i}$ are $(d_i, \mu)$-diverse over $\mathcal{F}$.
\end{prop}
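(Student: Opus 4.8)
The plan is to exploit the defining property of the id-rank, namely that every prediction function is uniformly within $\mu$ of a genuine linear functional in the embedding $\phi$, and thereby reduce the whole question to a linear-algebraic comparison of excess errors. First I would dispatch the existence claim: if $w(\mathcal{F})$ failed to span $\mathbb{R}^{d_i}$, I could compose $\phi$ with the orthogonal projection onto $\mathrm{span}\,w(\mathcal{F})$, leaving every inner product $\langle w(f),\phi(x)\rangle$ unchanged while exhibiting an embedding of strictly smaller dimension, contradicting the minimality of $d_i$. Hence $w(\mathcal{F})$ spans, and I may select $f_1,\dots,f_{d_i}\in\mathcal{F}$ with $\beta_t:=w(f_t)$ a basis; these are the source tasks.

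Next I would linearize the excess errors. Under the square loss, the excess error for a fixed representation is the squared $L^2(P_X)$ distance between mean functions, so writing $\psi(X):=\phi(h(X))$ and $\psi^*(X):=\phi(h^*(X))$, the id-rank inequality gives $\mathcal{E}_{ta}(f,h)=\E_X[(\langle w(f),\psi\rangle-\langle w(f_{ta}^*),\psi^*\rangle)^2]+O(R\mu)$ for every $f$, and the analogous statement for each source term; the pointwise $\mu$-bound makes each such replacement cost at most $O(R\mu)$ since the inner products are bounded by $R$. I would then pass to $L^2(P_X)$ and let $P$ be the orthogonal projection onto the closed span of the coordinate functions of $\psi$. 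With $m_g(X):=\langle g,\psi^*(X)\rangle$ for $g\in\mathbb{R}^{d_i}$, the best linear predictor renders each source and the target excess error equal to $\|(I-P)m_{\beta_t}\|^2$ and $\|(I-P)m_{\beta_{ta}}\|^2$ (up to $O(R\mu)$), where $\beta_{ta}:=w(f_{ta}^*)$.

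The crux is then the algebra of the residuals. Expanding $\beta_{ta}=\sum_t c_t\beta_t$ in the chosen basis and using that both $m_{(\cdot)}$ and $I-P$ are linear, the target residual is the same linear combination of the source residuals, so a Cauchy--Schwarz step yields $\|(I-P)m_{\beta_{ta}}\|^2\le \|c\|_2^2\sum_{t}\|(I-P)m_{\beta_t}\|^2$. Because the denominator in Definition~\ref{defn:transferable} is the \emph{average} $\tfrac1{d_i}\sum_t\mathcal{E}_{so,t}$, the sum already supplies a factor $d_i$, and the transfer component comes out as $\nu=d_i\|c\|_2^2$. Equivalently, I would phrase the requirement representation-independently: the bound must hold against every second-moment matrix $\Sigma=\E[(I-P)\psi^*\,((I-P)\psi^*)^\top]$ produced by an $h\in\mathcal{H}$, and since $\mathcal{E}_{ta}=\beta_{ta}^\top\Sigma\beta_{ta}$ and $\sum_t\mathcal{E}_{so,t}=\mathrm{tr}(\Sigma\sum_t\beta_t\beta_t^\top)$, the clean factor $d_i$ holds for all $\Sigma\succeq0$ precisely when $\beta_{ta}\beta_{ta}^\top\preceq\sum_t\beta_t\beta_t^\top$, i.e. when $\|c\|_2\le1$.

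This coefficient-control step is where I expect the real difficulty to lie, and it is the main obstacle. A generic spanning set gives only $\|c\|_2=O(\sqrt{d_i})$, which would degrade $\nu$. I would therefore select $f_1,\dots,f_{d_i}$ to maximize $|\det[w(f_1),\dots,w(f_{d_i})]|$ over $\mathcal{F}$ (a maximal-volume, D-optimal choice): a swap argument then forces $\|c\|_\infty\le1$ for every target, which drives the PSD comparison and pins the factor to order $d_i$; tightening $\|c\|_\infty\le 1$ to the exact $\|c\|_2\le1$ needed for the stated constant is the delicate point and would require arguing that the selected basis puts $w(\mathcal{F})$ inside its unit-coefficient ellipsoid. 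Finally I would fold the $O(R\mu)$ linearization slack into the additive bias, matching the $(d_i,\mu)$ form, and take the suprema over $h\in\mathcal{H}$ and $f_{ta}^*\in\mathcal{F}$, observing that the maximal-volume basis and the coefficient bound are chosen independently of the representation, so the resulting bound is uniform.
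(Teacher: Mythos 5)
Your overall strategy is, at its core, the same mechanism the paper relies on: the paper's entire proof is a one-line appeal to Lemma 7 of Tripuraneni et al.\ (the convex/linear-combination fact quoted just before the proposition), which is precisely the step you carry out by hand --- write the target residual as the same linear combination of the source residuals under the projection $I-P$ and apply Cauchy--Schwarz. Your treatment of the existence/spanning claim via projection onto $\mathrm{span}\,w(\mathcal{F})$ and your accounting of the $O(R\mu)$ linearization slack are fine and considerably more explicit than anything in the paper.

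The genuine gap is exactly where you place it, and the proposed repair does not close it. Your bound is $\nu = d_i\|c\|_2^2$ where $w(f_{ta}^*)=\sum_t c_t\,w(f_t)$, so you need $\|c\|_2\le 1$ uniformly over $\mathcal{F}$. The maximal-volume selection only yields $\|c\|_\infty\le 1$ via the swap argument, hence $\|c\|_2\le\sqrt{d_i}$ and $\nu=O(d_i^2)$, a factor $d_i$ worse than claimed; and the hoped-for upgrade to $\|c\|_2\le 1$ is false in general. A concrete obstruction already appears in $\mathbb{R}^2$: for three unit vectors at angles $0^\circ$, $60^\circ$, $120^\circ$, every pair is a maximal-volume basis, yet expressing the third vector in the basis formed by the outer two gives $c=(1,1)$ and $\|c\|_2=\sqrt2$. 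Note also that the proposition asserts diversity for functions whose $w$-images merely \emph{span} $\mathbb{R}^{d_i}$; under that hypothesis alone the coefficients are unbounded (take two nearly parallel $w(f_t)$), so no finite $\nu$ follows at all --- the paper's citation of Lemma 7 silently assumes $w(f_{ta}^*)$ lies in an $\ell_1$- or $\ell_2$-bounded hull of $\{w(f_t)\}$, which is the very coefficient bound you isolate and cannot establish. In short, you have correctly located the missing ingredient, but neither your repair nor the paper's one-line proof supplies it; an honest version of the statement needs either a well-conditionedness hypothesis on the chosen $w(f_t)$ (e.g.\ a lower bound on $\sigma_{\min}$ relative to $R$) or a $\nu$ that degrades accordingly.
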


Proposition \ref{prop:gr} is a direct application of Lemma 7 in \cite{tripuraneni2020theory}. Upper bounding $\text{id-rank}$ is hard for general function class. However, this notation can be useful for those function spaces with a known set of basis functions. For example, any function space $\mathcal{F}$ that is square-integrable has a Fourier basis. Though the basis is an infinite set, we can choose a truncation level $d$ such that the truncation errors of all functions are less than $\mu$. Then the hypothesis space $\mathcal{F}$ has $\mu$-approximate $\operatorname{id-rank}$ less than $d$.

\section{Experiments}

In this section, we use simulated environments to evaluate the actual performance of representation learning on DNNs trained with gradient-based optimization methods. We also test the impact of various hyperparameters. Our results indicate that even though our theory is shown to hold for ERM estimators, their qualitative theoretical predictions still hold when Adam is applied and the global minima might not be found. Before we introduce our experimental setups, we discuss a difference between our theories and experiments: our experiments use a single regression task with multiple outputs as the source task, while our analyses are built on multiple source tasks. 

\subsection{Diversity of problems with multiple outputs}

Though we have been discussing achieving diversity from multiple source tasks, we may only have access to a single source task in many real applications, for example, ImageNet, which is also the case in our simulations. In fact, we can show that diversity can be achieved by a single multiclass classification or multivariate regression source task. The diversity for the single multi-variate regression problem with L2 loss is trivial as the loss function is decomposable. For multi-class classification problems or regression problems with other loss functions, we will need some assumptions on boundedness and continuous. We refer the readers to Appendix \ref{app:multi_output} for details.

\subsection{Experiments setup} 

Our four experiments are designed for the following goals. The first two experiments (Figure \ref{fig:simulations}, a and b) target on the actual dependence on $n_{so}, n_{ta}, K_{so}$ and $K_{ta}$ in Equation (\ref{equ:dnn_bound}) that upper bounds the errors of DNN prediction functions. Though the importance of diversity has been emphasized by various theories, no one has empirically shown its benefits over random tasks selection, which we explore in our third experiment (Figure \ref{fig:simulations}, c). Our fourth experiment (Figure \ref{fig:simulations}, d) verifies the theoretical negative results of nonlinearity of source prediction functions showed in Theorem \ref{thm:ReLu_hard_case} and \ref{thm:eluder}. 

Though the hyper-parameters vary in different experiments, our main setting can be summarized below. We consider DNN models of different layers for both source and target tasks. The first $K$ layers are the shared representation. The source task is a multi-variate regression problem with output dimension $p$ and $K_{so}$ layers following the representation. The target task is a single-output regression problem with $K_{ta}$ layers following the representation. We used the same number of units for all the layers, which we denote by $n_u$. 
A representation is first trained on the source task using $n_{so}$ random samples and is fixed for the target task, trained on $n_{ta}$ random samples. In contrast, the baseline method trains the target task directly on the same $n_{ta}$ samples without the pretrained network. We use Adam with default parameters for all the training. We use MSE (Mean Square Error) to evaluate the performance under different settings.

\subsection{Results} 

Figure \ref{fig:simulations} summaries our four experiments, with all the Y axes representing the average MSE's of 100 independent runs with an error bar showing the standard deviation. The X axis varies depending on the goal of each experiment. In subfigure (a), we test the effects of the numbers of observations for both source and target tasks, while setting other hyperparameters by default values. The X axis represents $n_{so}$ and the colors represents $n_{ta}$. In subfigure (b), we test the effects of the number of shared representation layers $K$. To have comparable MSE's, we keep the sum $K + K_{ta} = 6$ and run $K = 1, \dots, 5$ reflected in the X axis, while keeping $K_{so} = 1$. In subfigure (c), we test the effects of diversity. The larger $p$ we have, the more diverse the source task is. We keep the actual number of observations $n_{so} \cdot p = 4000$ for a fair comparison. Lastly, in subfigure (d), we test whether diversity is hard to achieve when the source prediction function is nonlinear. The X axis is the number of layers in source prediction function $K_{so}$. The nonlinearity increases with $K_{so}$. We run $K_{so} = 1, 2, 3$ and add an activation function right before the output such that the function is nonlinear even if $K_{so} = 1$. 

\begin{figure}[htb]
    \centering
    \begin{subfigure}[b]{0.45\textwidth}
    \includegraphics[width = 1\textwidth]{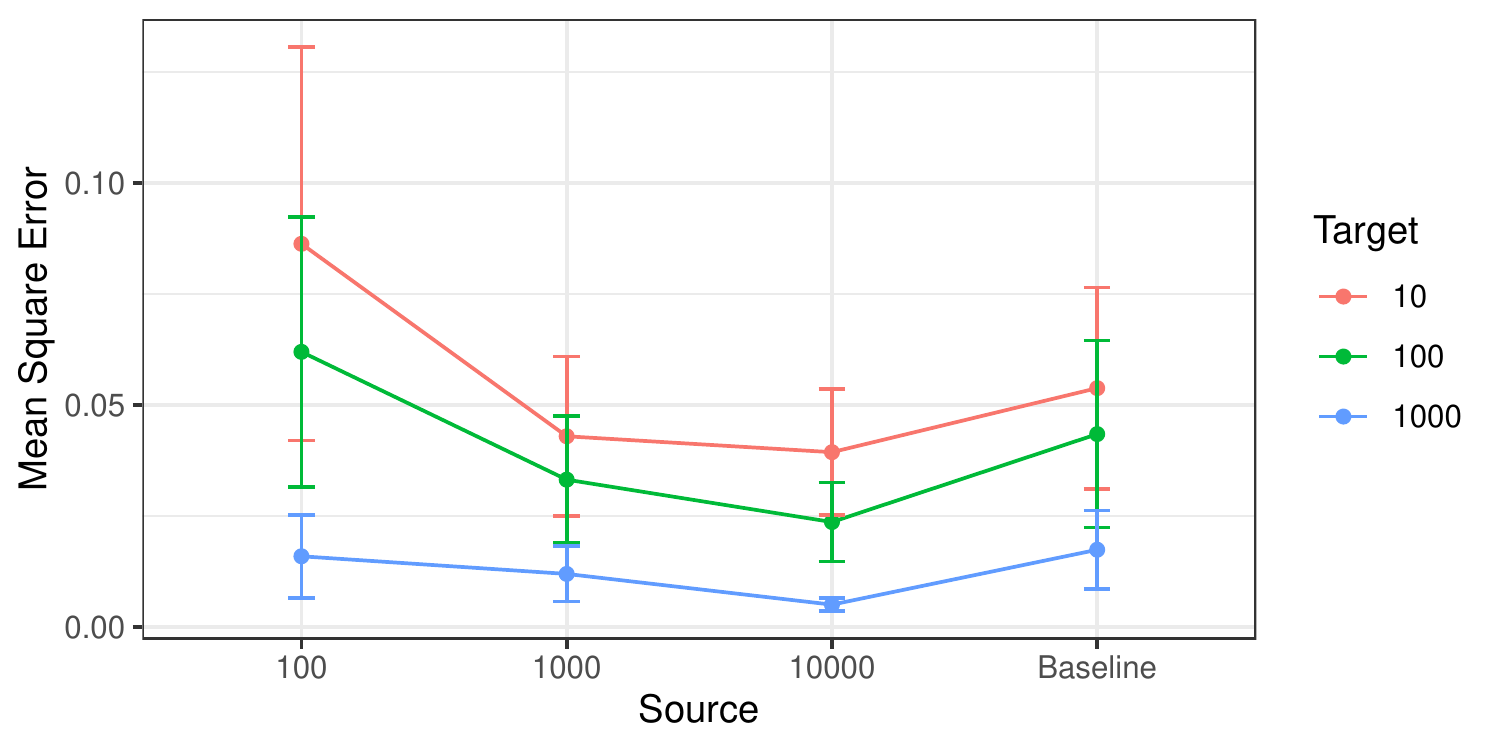}
    \caption{Effects of sample sizes}
    \end{subfigure}
    \begin{subfigure}[b]{0.45\textwidth}
    \includegraphics[width = 1\textwidth]{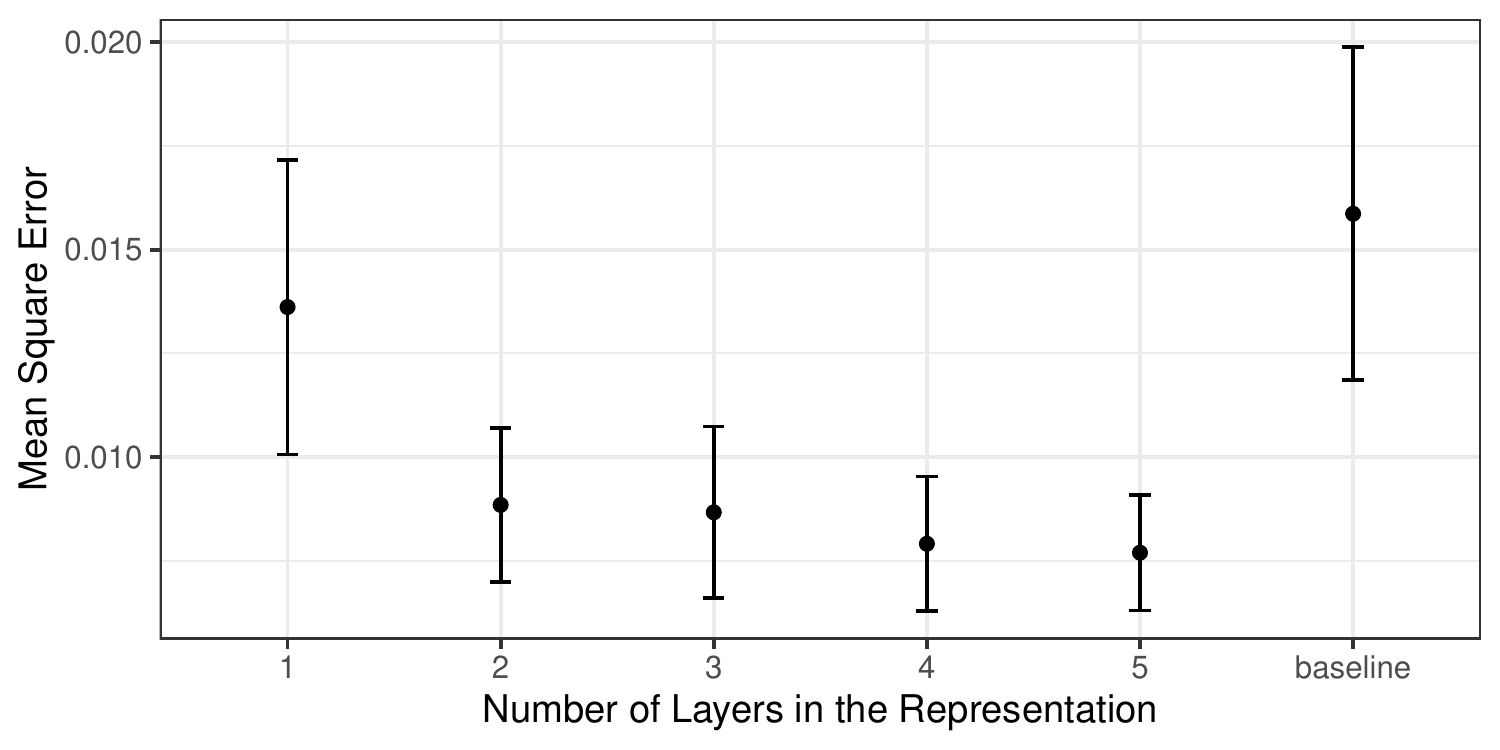}
    \caption{Effects of $K$}
    \end{subfigure}
    \begin{subfigure}[b]{0.45\textwidth}
    \includegraphics[width = 1\textwidth]{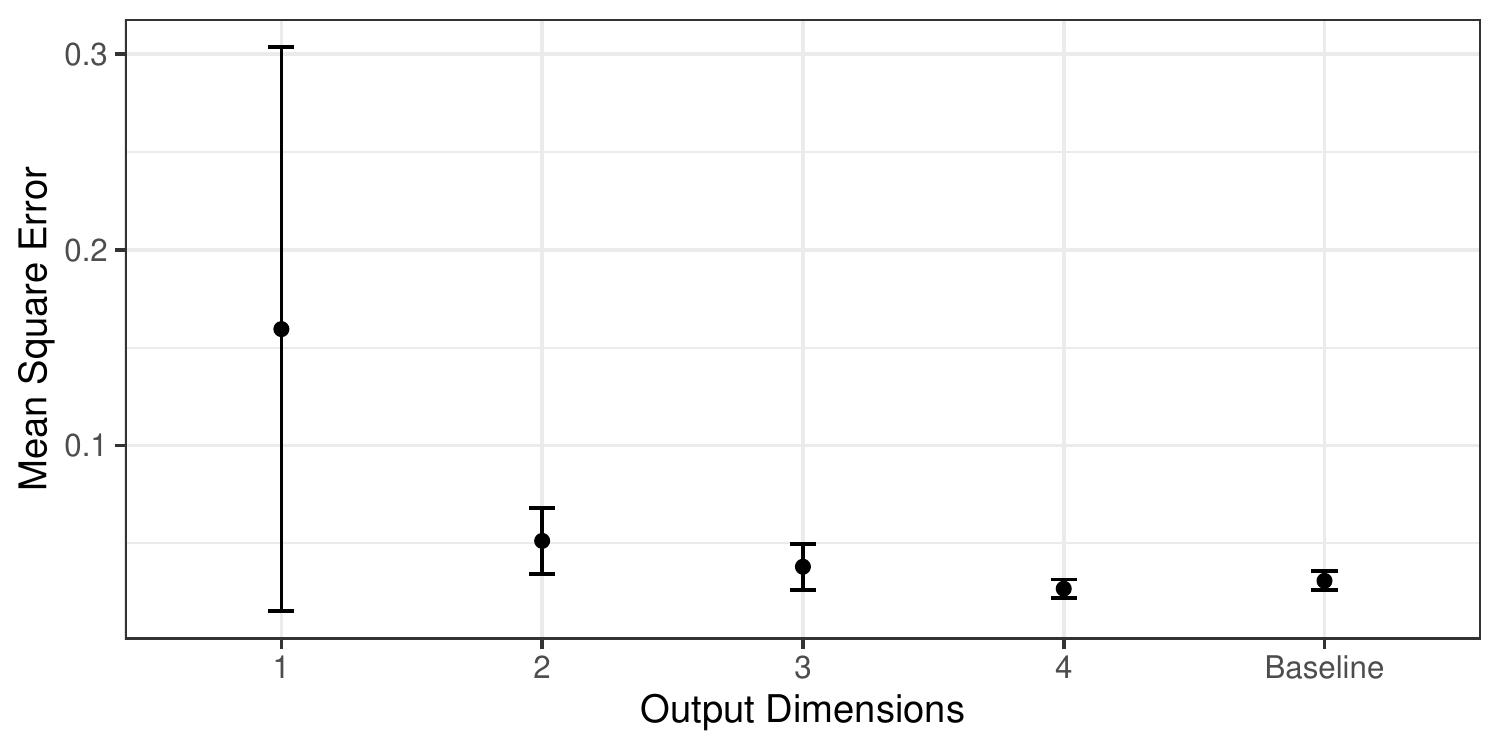}
    \caption{Effects of diversity}
    \end{subfigure}
    \begin{subfigure}[b]{0.45\textwidth}
    \includegraphics[width = 1\textwidth]{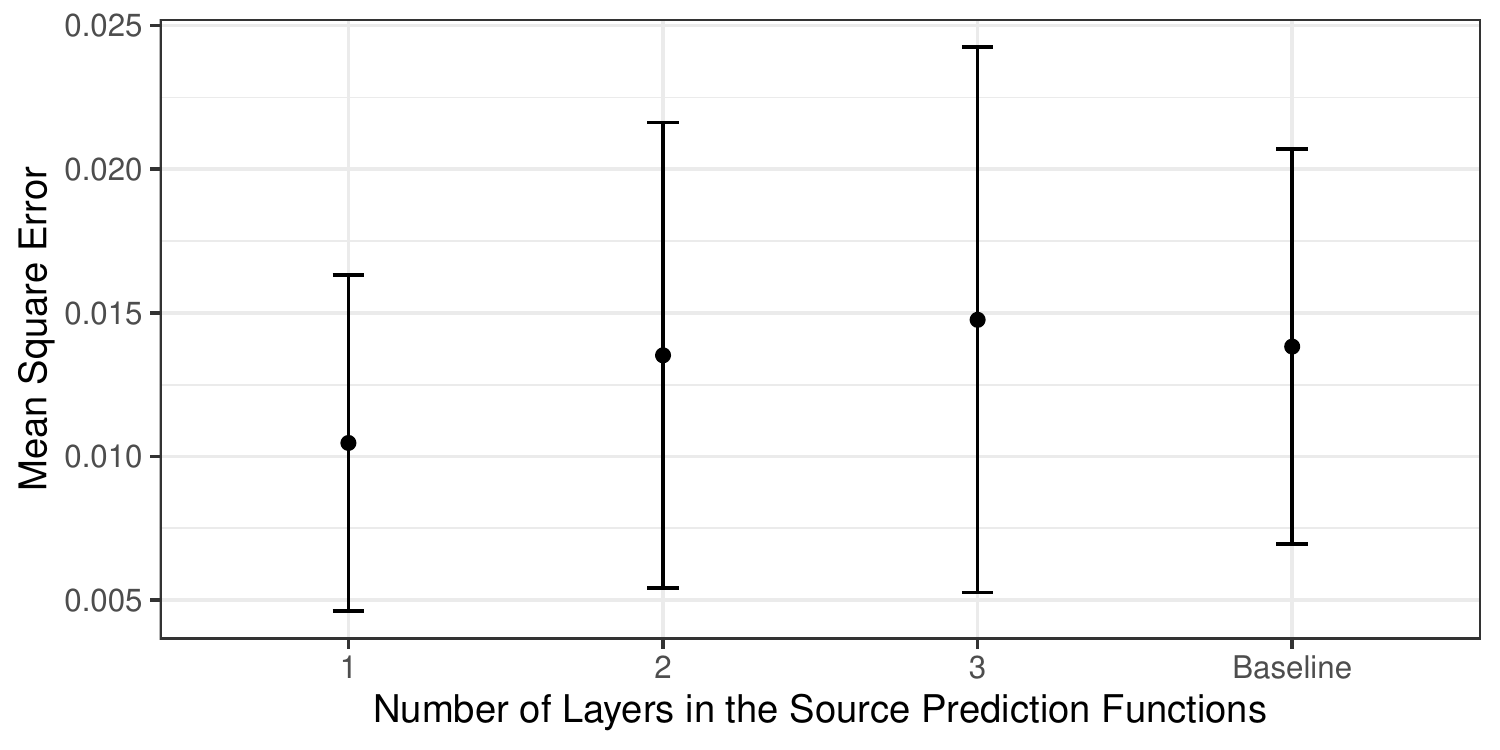}
    \caption{Effects of $K_{so}$}
    \end{subfigure}
    \caption{\textbf{(a)} Effects of the numbers of observations for both source ($n_{so}$) and target tasks ($n_{ta}$). \textbf{(b)} Effects of the number of shared representation layers $K$. 
    \textbf{(c)} Effects of diversity determined by the output dimensions $p$. We keep the actual number of observations $n_{so} \cdot p = 4000$. \textbf{(d)} Effects of nonlinearity of the source prediction function. Higher $K_{so}$ indicates higher nonlinearity.}
    \label{fig:simulations}
\end{figure}

In Figure \ref{fig:simulations} (a), MSE's decrease with larger numbers of observations in both source and target tasks, while there is no significant difference between $n_{so} = 1000$ and $10000$. The baseline method without representation learning performs worst and it performs almost the same when $n_{ta}$ reaches 1000. In (b), there are positive benefits for all different numbers of the shared layers and the MSE is the lowest at 5 shared layers. As shown in Figure \ref{fig:simulations} (c), the MSE's and their variances are decreasing when the numbers of outputs increase, i.e., higher diversity. Figure \ref{fig:simulations} (d) shows that there is no significant difference between baseline and $K_{so} = 1, 2, 3$. When $K_{so} = 2, 3$ there is a negative effects. 
 
\section{Discussions}
\label{sec:discussion}
In this paper, we studied representation learning beyond linear prediction functions. We showed that the learned representation can generalize to tasks with multi-layer neural networks as prediction functions as long as the source tasks use linear prediction functions. We show the hardness of being diverse when the source tasks are using nonlinear prediction functions by giving a lower bound on the number of source tasks in terms of eluder dimension. We further give an upper bound depending on the generalized rank. 

Focusing on future work, we need better tools to understand the  recently proposed complexity measure generalized rank. Our analyses rely on the ERM, while in practice as well as in our simulations, gradient-based optimization algorithms are used. Further analyses on the benefits of representation learning that align with practice on the choice of optimization method should be studied. Our analyses assume arbitrary source tasks selection, while, in real applications, we may have limited data from groups that are under represented, which may lead to potential unfairness. Algorithm that calibrates this potential unfairness should be studied.


\newpage


\bibliography{main}
\bibliographystyle{apalike}

\newpage

\appendix
\section{Proof of Theorem \ref{thm:approx_error1}}
\label{app:approx_error1}
\begin{proof}
Note that the second phase is to find the best function within the class $\mathcal{F}_{ta} \circ \hat h$. We first apply the standard bounded difference inequality \citep{bartlett2002rademacher} as shown in Theorem \ref{thm:Gssn}.
\begin{thm}[\cite{bartlett2002rademacher}]
\label{thm:Gssn}
With a probability at least $1-\delta$,
$$
    \sup_{f \in \mathcal{F}_{ta}} |R_{ta}(f \circ \hat h) - \hat R_{ta}(f \circ \hat h)| \leq \frac{\sqrt{2 \pi} \hat{\mathfrak{G}}_{n_{ta}}(\mathcal{F}_{ta}\circ \hat h)}{\sqrt{n_{ta}}}+\sqrt{\frac{9 \ln (2 / \delta)}{2 n_{ta}}} \eqqcolon \epsilon(\hat h, n_{ta}, \delta),
$$
furthermore, the total generalization error can be upper bounded by
\begin{align*}
    \mathcal{E}_{ta}(\hat f_{ta} \circ \hat h) \leq \underbrace{\inf_{f\in\mathcal{F}_{ta}} \mathcal{E}_{ta}(f, \hat h)}_{\text{approximation error}} + \underbrace{\epsilon(\hat h, n_{ta}, \delta)}_{\text{generalization error over } \mathcal{F}_{ta}\circ \hat h}. \numberthis \label{equ:tar_ge}
\end{align*}
\end{thm}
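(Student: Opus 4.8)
The plan is to establish both displayed claims by the standard symmetrization-plus-bounded-differences argument of \cite{bartlett2002rademacher}, with one structural observation that makes everything go through cleanly: although $\hat h$ is data-dependent, it is learned in the first (training) phase from the source samples, which are independent of the $n_{ta}$ target samples drawn in the second phase. Hence I would condition on $\hat h$ throughout and treat $\mathcal{G} \coloneqq \mathcal{F}_{ta}\circ\hat h$ as a \emph{fixed} class of functions $\mathcal{X}\mapsto\mathcal{Y}_{ta}$; conditionally, the target data $\{(x_{ta,i},y_{ta,i})\}$ are i.i.d. and independent of $\mathcal{G}$, so every quantity appearing in $\epsilon(\hat h, n_{ta}, \delta)$ is an ordinary (data-dependent) complexity of a nonrandom class.

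For the first inequality I would control the one-sided supremum $\Phi(S)=\sup_{f\in\mathcal{F}_{ta}}\bigl(R_{ta}(f\circ\hat h)-\hat R_{ta}(f\circ\hat h)\bigr)$. Boundedness of the square loss on the relevant range makes $\Phi$ change by a controlled amount when a single target sample is replaced, so McDiarmid's inequality gives $\Phi\le\mathbb{E}\Phi+\sqrt{\ln(2/\delta)/(2n_{ta})}$ up to the stated numerical constant; applying the same bound to $-\Phi$ and taking a union bound over the two directions produces the $\ln(2/\delta)$ with the factor $2$ and controls the two-sided $\sup_f|R_{ta}-\hat R_{ta}|$. It then remains to bound $\mathbb{E}\Phi$: symmetrization gives $\mathbb{E}\Phi\le 2\,\mathfrak{R}_{n_{ta}}$ of the loss-composed class, Talagrand's contraction lemma peels off the Lipschitz loss $l_{ta}$ to leave the Rademacher complexity of $\mathcal{G}$, and the comparison $\mathfrak{R}_{n}\le\sqrt{\pi/2}\,\mathfrak{G}_{n}$ converts this to Gaussian complexity. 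The factor $2$ from symmetrization and the $\sqrt{\pi/2}$ from the comparison combine into the $\sqrt{2\pi}$ of the statement; a final concentration step (again McDiarmid, absorbed into the constant in front of $\ln(2/\delta)$) replaces the population Gaussian complexity by the empirical $\hat{\mathfrak{G}}_{n_{ta}}(\mathcal{F}_{ta}\circ\hat h)$.

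For the decomposition (\ref{equ:tar_ge}) I would expand $\mathcal{E}_{ta}(\hat f_{ta}\circ\hat h)=R_{ta}(\hat f_{ta}\circ\hat h)-\min_{f',h'}R_{ta}(f'\circ h')$ and split
\begin{align*}
\mathcal{E}_{ta}(\hat f_{ta}\circ\hat h)
&=\underbrace{\Bigl(R_{ta}(\hat f_{ta}\circ\hat h)-\inf_{f\in\mathcal{F}_{ta}}R_{ta}(f\circ\hat h)\Bigr)}_{\text{estimation error over }\mathcal{G}}
+\underbrace{\Bigl(\inf_{f\in\mathcal{F}_{ta}}R_{ta}(f\circ\hat h)-\min_{f',h'}R_{ta}(f'\circ h')\Bigr)}_{=\ \inf_{f}\mathcal{E}_{ta}(f,\hat h)}.
\end{align*}
The second bracket is precisely the approximation term. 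For the first, let $f^{\dagger}$ attain (up to arbitrarily small slack) the population infimum over $\mathcal{F}_{ta}$; since $\hat f_{ta}$ is the empirical risk minimizer, $\hat R_{ta}(\hat f_{ta}\circ\hat h)\le\hat R_{ta}(f^{\dagger}\circ\hat h)$, and inserting the uniform deviation bound from the first part on both $\hat f_{ta}$ and $f^{\dagger}$ controls the estimation error by $\epsilon(\hat h, n_{ta}, \delta)$, which gives the claim.

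The genuine subtlety — and the place I would be most careful — is the data-dependence of $\hat h$: without the conditioning observation one cannot invoke a fixed-class uniform-convergence bound, and $\hat{\mathfrak{G}}_{n_{ta}}(\mathcal{F}_{ta}\circ\hat h)$ would not even be well defined as the complexity of a nonrandom class. The conditioning is legitimate exactly because of the two-phase sample-splitting protocol, so this is more a matter of stating the argument correctly than of surmounting a hard estimate. The remaining work is constant bookkeeping through the contraction lemma and the Rademacher–Gaussian comparison, together with the population-to-empirical replacement of the complexity; pinning down the precise constants $\sqrt{2\pi}$ and $9/2$ is where one follows \cite{bartlett2002rademacher} verbatim.
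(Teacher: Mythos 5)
Your proposal is essentially the argument the paper relies on: the paper does not prove this theorem itself but imports it from \cite{bartlett2002rademacher}, implicitly using exactly the observation you make explicit --- that the two-phase protocol makes the $n_{ta}$ target samples independent of $\hat h$, so that $\mathcal{F}_{ta}\circ\hat h$ may be treated conditionally as a fixed class --- after which McDiarmid, symmetrization, contraction, the Rademacher--Gaussian comparison, and the standard ERM decomposition yield both displays with the stated constants. The one piece of slack is your final step, where inserting the two-sided uniform deviation bound at both $\hat f_{ta}$ and $f^{\dagger}$ gives estimation error $2\epsilon(\hat h, n_{ta}, \delta)$ rather than $\epsilon(\hat h, n_{ta}, \delta)$; the stated form is recovered by using the one-sided uniform bound for $\hat f_{ta}$ together with plain Hoeffding for the single fixed comparator $f^{\dagger}$, whose deviation is absorbed into the $\sqrt{9\ln(2/\delta)/(2n_{ta})}$ term --- precisely the constant bookkeeping you defer to \cite{bartlett2002rademacher}.
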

The result follows by the definition,
$$
\inf_{f \in \mathcal{F}_{ta}}\mathcal{E}_{ta}(f, \hat h) \leq  \frac{\inf_{f_{ta} \in \mathcal{F}_{ta}}\mathcal{E}_{ta}(f_{ta}, \hat h)}{\inf_{\vect{f}_{so} \in \mathcal{F}_{so}^{\otimes S}}\mathcal{E}_{so}(\vect{f}_{so}, \hat h) + \mu / \nu} (\mathcal{E}_{so}(\hat{\vect{f}}_{so}, \hat h) + \mu / \nu) \leq \nu \mathcal{E}_{so}(\hat{\vect{f}}_{so}, \hat h) + \mu.
$$
\end{proof}

\section{Proof of Theorem \ref{thm:source_as_rep}}
\label{app:source_as_rep}

\begin{proof}
To show $f_{so}^*$ is $(L, 0)$-transferable to $f_{ta}^*$, we bound the approximation error of the target task given any fixed $h \in \mathcal{H}$.
\begin{align*}
    &\quad \mathcal{E}_{ta}(f^*_{ta}, h)\\
    &= \E_{X, Y}\left[l_{ta}(f^*_{ta} \circ h(X), Y) - l_{ta}(f^*_{ta} \circ h^*(X), Y)\right] \\
    &= \E_{X} \|f_{ta}^*\circ h(X) - f_{ta}^*\circ h^*(X)\|^2_2\\
    &\leq L \E_{X} \| h(X) -  h^*(X)\|^2_2. \numberthis \label{equ:lem1_1}
\end{align*}
Now using Assumption \ref{aspt:cont_tar}, we have 
\begin{align*}
\mathcal{E}_{so}(h) 
&= \E_{X, Y}[l_{so}(h(X), Y) - l_{so}(h^*(X), Y)]\\
&= E_{X}\|h^*(X) - h(X)\|_2^2
\end{align*}
Combined with Equation (\ref{equ:lem1_1}), we have 
$
    \sup _{h \in \mathcal{H}}[ { \mathcal{E}_{{ta}}(f_{ta}^*, h)}/{ \mathcal{E}_{so}(h)}] \leq L.
$

Firstly, using Theorem \ref{thm:Gssn} on source tasks solely, we have
$
    \mathcal{E}_{so}(\hat h) = \tilde{\mathcal{O}}({\hat{\mathfrak{G}}_{n_{s o}}(\mathcal{G})}/{\sqrt{n_{s o}}}).
$
Definition \ref{defn:transferable} gives us
$$
    {\inf _{f \in \mathcal{F}_{\operatorname{ta}}} \mathcal{E}_{\operatorname{ta}}(f, \hat h)} \leq L \mathcal{E}_{s o}(\hat{h}) = \tilde{\mathcal{O}}(L{\hat{\mathfrak{G}}_{n_{s o}}(\mathcal{G})}/{\sqrt{n_{s o}}}).
$$
Combined with (\ref{equ:tar_ge}), we have 
$$
    \mathcal{E}_{t a}\left(\hat{f}_{t a}, \hat{h}\right)=\tilde{\mathcal{O}}\left(L \frac{\hat{\mathfrak{G}}_{n_{s o}}(\mathcal{G})}{\sqrt{n_{s o}}}+\frac{\hat{\mathfrak{G}}_{n_{\mathrm{ta}}}\left(\mathcal{F}_{ta} \circ \hat{h}\right)}{\sqrt{n_{t a}}}\right).
$$
\end{proof}

\section{Proof of Theorem \ref{thm:general_case}}
\label{app:general_case}

\begin{proof}
Let $f_{ta}^*(x) = f'^*_{ta}(f^{*}_{1}\circ h^*(x), \dots f^{*}_{m}\circ h^*(x))$. Define new tasks $t_1, \dots t_m$. Each $t_i$ has the prediction function $f^{*}_{i}$.

By Definition \ref{defn:transferable}, the source tasks are $(\nu, \mu)$-transferable to each $t_i$. By Theorem \ref{thm:approx_error1}, we have
$$
    \inf_{f_i \in \mathcal{F}_{so}} \mathcal{E}_{t_i}(f_i, \hat h) \leq \nu\mathcal{E}_{so}(\hat{\vect{f}}_{so}, \hat h) + \mu.
$$
Since we use $L_2$ loss, 
$$
     \inf_{f_i \in \mathcal{F}_{so}} \E_X\|f_i\circ \hat h(X) - f_i^* \circ h^*(X)\|_2^2 = \inf_{f_i \in \mathcal{F}_{so}} \mathcal{E}_{t_i}(f_i, \hat h) \leq \nu\mathcal{E}_{so}(\hat{\vect{f}}_{so}, \hat h) + \mu.
$$
As this holds for all $i \in [m]$, we have 
$$
    \inf_{f_1, \dots, f_m \in \mathcal{F}_{so}} \E_X\|(f_1\circ \hat h(X), \dots, f_m\circ \hat h(X)) - (f^*_1\circ h^*(X), \dots, f^*_m\circ \hat h^*(X))\|_2^2 \leq m(\nu\mathcal{E}_{so}(\hat{\vect{f}}_{so}, \hat h) + \mu).    
$$
Using Assumption \ref{aspt:2}, we have 
$$
    \inf_{f \in \mathcal{F}_{ta}}\mathcal{E}_{ta}(f, \hat h) \leq L'm(\nu\mathcal{E}_{so}(\hat{\vect{f}}_{so}, \hat h) + \mu).
$$

Theorem \ref{thm:general_case} follows by plugging this into (\ref{equ:tar_ge}).
\end{proof}

\section{Proof of Corollary \ref{cor:deep_application}}
\label{app:deep_application}
This section we prove Corollary \ref{cor:deep_application} using Theorem \ref{thm:general_case}, the standard bound for Gaussian complexity of DNN model and the Gaussian complexity decomposition from \cite{tripuraneni2020theory}.

The following theorem bounds the Rademacher complexity of a deep neural network model given an input dataset $\vect{X}_N = (\vect{x}_1, \dots, \vect{x}_N)^T \in \mathbb{R}^{N \times d}$.
\begin{thm}[\cite{golowich2018size}]
\label{thm:Rad_DNN}
Let $\sigma$ be a $1$-Lipschitz activation function with $\sigma(0) = 0$. Recall that $\mathcal{M}_K$ is the depth $K$ neural network with $d$-dimensional output with bounded input $\|x_{ji}\| \leq D_Z$ and $\|W_k\|_{\infty} \leq M(k)$ for all $k \in [K]$. Recall that $\mathcal{L}=\left\{x \mapsto \alpha^{T} x+\beta: \forall \alpha \in \mathbb{R}^{p},\|\alpha\|_{2} \leq M(\alpha)\right\}$ is the linear class following the depth-$K$ neural network. Then,
$$
    \mathfrak{R}_{n}(\mathcal{L}\circ M_K; \vect{X}_N) \leq \frac{2 D_Z \sqrt{K+2+\log d} \cdot M(\alpha)\Pi_{k=1}^{K} M(k)}{\sqrt{n}}.
$$
\end{thm}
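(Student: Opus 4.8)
The plan is to follow the layer-stripping (\emph{peeling}) argument combined with an exponential-moment trick, which is the standard route to Rademacher bounds for deep networks whose dependence on the depth $K$ is only $\sqrt{K}$ rather than $2^{K}$. I work with the Rademacher complexity $\mathfrak{R}_n$ (signs $\xi_i\in\{\pm1\}$) of the \emph{scalar}-valued class $\mathcal{L}\circ\mathcal{M}_K$; the bias $\beta$ may be dropped immediately since $\E[\xi_i]=0$ annihilates any constant shift. The first step is to peel off the top linear layer: writing the network output as $\alpha^{T} v_{W}(x)$ with $v_{W}=\sigma(W_K\sigma(\cdots\sigma(W_1 x)))\in\mathbb{R}^{d}$, the constraint $\|\alpha\|_2\le M(\alpha)$ turns the supremum over $\alpha$ into a norm, $\sup_{\alpha}\sum_i \xi_i \alpha^{T} v_W(x_i)=M(\alpha)\,\|\sum_i \xi_i v_W(x_i)\|_2$, so that
$$
n\,\mathfrak{R}_n(\mathcal{L}\circ\mathcal{M}_K)\;=\;M(\alpha)\,\E_{\xi}\sup_{W_{1:K}}\Big\|\sum_{i=1}^n \xi_i\, \sigma\big(W_K\sigma(\cdots\sigma(W_1 x_i))\big)\Big\|_2 .
$$
This reduces everything to controlling the $\ell_2$-norm of a Rademacher sum of depth-$K$ vector outputs.

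The core of the argument is to linearize the recursion over layers. A naive $K$-fold application of the Ledoux--Talagrand contraction lemma (using that $\sigma$ is $1$-Lipschitz with $\sigma(0)=0$) would incur a multiplicative $2^{K}$, which is unacceptable; instead I pass to the exponential moment via Jensen: for any $\lambda>0$, $\E_\xi\sup(\cdot)\le \tfrac1\lambda\log\E_\xi\sup\exp(\lambda\,(\cdot))$. Inside the $\exp$ one peels a single layer at a time, using the operator-norm bound $\|W_k\|\le M(k)$ to extract the factor $M(k)$ and a symmetrization/contraction step for $\sigma$ that costs only a multiplicative constant $2$ \emph{outside} the $\exp$ per layer, obtaining
$$
\E_\xi\sup_{W_{1:K}}\exp\!\Big(\lambda\big\|\textstyle\sum_i \xi_i v_W(x_i)\big\|_2\Big)\;\le\;2^{K}\,\E_\xi\exp\!\Big(\lambda\,\Big(\textstyle\prod_{k=1}^{K}M(k)\Big)\big\|\textstyle\sum_i \xi_i x_i\big\|_2\Big).
$$
The crucial point is that the troublesome $2^{K}$ now sits inside a logarithm, so after $\tfrac1\lambda\log(\cdot)$ it contributes only the additive term $K\log 2$.

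It then remains to bound the base-case exponential moment $\E_\xi\exp(\lambda B\|\sum_i \xi_i x_i\|_2)$ with $B=\prod_k M(k)$. Here I would use that $\xi\mapsto\|\sum_i \xi_i x_i\|_2$ has bounded differences (a sign flip changes it by at most $2\|x_i\|_2\le 2D_Z$), hence is sub-Gaussian about its mean, and that $\E\|\sum_i \xi_i x_i\|_2\le\sqrt{\sum_i\|x_i\|_2^2}\le D_Z\sqrt{n}$; together these give $\E_\xi\exp(\lambda B\|\cdot\|_2)\le\exp(\lambda B\,D_Z\sqrt{n}+c\,\lambda^2 B^2 D_Z^2 n)$. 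Assembling the pieces, $\tfrac1\lambda\log$ of the whole expression is bounded by $\tfrac{K\log 2}{\lambda}+B D_Z\sqrt n+c\lambda B^2 D_Z^2 n$; choosing $\lambda\asymp\sqrt{K}/(B D_Z\sqrt n)$ balances the first and third terms and collapses the depth dependence into the advertised $\sqrt{K+2+\log d}$. Dividing by $n$ and reinstating $M(\alpha)$ yields $\tfrac{2D_Z\sqrt{K+2+\log d}\,M(\alpha)\prod_k M(k)}{\sqrt n}$.

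The main obstacle is the per-layer peeling step under the exponential: one must verify that stripping $\sigma(W_k\cdot)$ inside $\exp(\lambda\|\cdot\|)$ genuinely costs only a constant factor $2$ and a clean $M(k)$, which requires a careful symmetrization argument since the contraction lemma does not apply verbatim to the exponential of a norm, and which must account for the vector-valued output dimension $d$ — this is exactly where the additive $\log d$ inside the square root enters. Pinning down the leading constant and the precise $K+2+\log d$ is the delicate bookkeeping; once the peeling inequality is in hand, the optimization over $\lambda$ and the remaining algebra are routine.
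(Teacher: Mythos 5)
The paper does not actually prove this statement: Theorem~\ref{thm:Rad_DNN} is imported as a black box from \cite{golowich2018size} and used directly in the proof of Corollary~\ref{cor:deep_application}. Your sketch is a faithful outline of the proof in that reference: strip the top linear layer via $\sup_{\|\alpha\|_2\le M(\alpha)}\alpha^{T}v=M(\alpha)\|v\|_2$, pass to an exponential moment so that the factor $2$ incurred per layer accumulates additively as $K\log 2$ inside the logarithm rather than multiplicatively as $2^{K}$, and close with a sub-Gaussian/bounded-differences bound on $\|\sum_i\xi_i x_i\|_2$ followed by optimization over $\lambda$.

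The step you flag as the main obstacle is genuinely the crux, and your description of it is slightly off. In \cite{golowich2018size} the per-layer peeling lemma does not extract $M(k)$ directly from a norm constraint on $W_k$ under the exponential of a vector norm. It first uses the norm budget on $W_k$ together with \emph{positive homogeneity} of $\sigma$ to collapse $\sup_{W_k}\bigl\|\sum_i\xi_i\sigma(W_k h(x_i))\bigr\|_2$ to a scalar supremum $M(k)\sup_{\|w\|\le 1}\bigl|\sum_i\xi_i\sigma(\langle w,h(x_i)\rangle)\bigr|$ (the supremum over rows of $W_k$ puts the entire norm budget on a single row), and only then applies the Ledoux--Talagrand contraction inequality for convex increasing functions of a scalar Rademacher sum, which is where the multiplicative factor $2$ outside the expectation comes from. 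Under the hypotheses as stated here (merely $1$-Lipschitz with $\sigma(0)=0$), the homogeneity step does not go through as written, so you must either add positive homogeneity as a hypothesis (ReLU, used elsewhere in this paper, satisfies it) or supply a genuinely different peeling argument; your ``careful symmetrization'' placeholder is exactly the missing content. Once that lemma is in hand, the $\log d$ term and the remaining bookkeeping fall out as you anticipate.
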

Since for any function class $\mathcal{F}$, $\hat{\mathfrak{G}}_{n}(\mathcal{F}) \leq 2 \sqrt{\log n} \cdot \hat{\mathfrak{R}}_{n}(\mathcal{F})$, we also have the bound for the Gaussian complexity under the same conditions.

Applying Theorem \ref{thm:Rad_DNN}, we have an upper bound for the second term in Theorem \ref{thm:general_case}:
$$
    \frac{\hat{\mathfrak{G}}_{n_{t a}}\left(\mathcal{F}_{t a} \circ \hat{h}\right)}{\sqrt{n_{t a}}} \leq \frac{2D_{Z}\sqrt{\log( n_{ta})}\sqrt{K_{ta}+2+\log(d)}M_{\alpha}\Pi_{k = 1}^{K_{ta}}M(k)}{\sqrt{n_{ta}}} = \tilde{\mathcal{O}}\left(\frac{D_{Z}\\sqrt{K_{ta}}M(\alpha)\Pi_{k = 1}^{K_{ta}} M(k)}{\sqrt{n_{ta}}}\right).
$$

It only remains to bound 
$\hat{\mathfrak{G}}_{T n_{s o}}\left(\mathcal{F}_{s o}^{\otimes T} \circ \mathcal{H}\right)/{\sqrt{T n_{s o}}}$ in Theorem \ref{thm:general_case}. To proceed, we introduce the decomposition theorem for Gaussian complexity \citep{tripuraneni2020theory}.
\begin{thm}[Theorem 7 in \cite{tripuraneni2020theory}]
\label{thm:gau_chain}
Let the function class $\mathcal{F}$ consist of functions that are $L(\mathcal{F})$-Lipschitz and have boundedness parameter $D_{X} = \sup_{f,f',x,x'} \|f(x) - f'(x')\|_{2}$. Further, define $\mathcal{Q} = \{h(\bar X): h \in \mathcal{H}, \bar X \in \cup_{j = 1}^T \{X_j\}\}$. Then the Gaussian complexity of the function class $\mathcal{F}^{\otimes T}(\mathcal{H})$ satisfies,
$$
    \hat{\mathfrak{G}}_{\mathbf{X}}\left(\mathcal{F}^{\otimes T}(\mathcal{H})\right) \leq \frac{4 D_{\mathbf{X}}}{(n T)^{3/2}}+128 C\left(\mathcal{F}^{\otimes T}(\mathcal{H})\right) \cdot \log (n T),
$$
where $C\left(\mathcal{F}^{\otimes t}(\mathcal{H})\right) = L(\mathcal{F})\hat{\mathfrak{G}}_{\mathbf{X}}(\mathcal{H}) + \max_{\mathbf{q} \in \mathcal{Q}} \hat{\mathfrak{G}}_{\mathbf{q}}(\mathcal{F})$.
\end{thm}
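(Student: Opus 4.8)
The statement is the Gaussian-complexity chain (decomposition) rule of \cite{tripuraneni2020theory}, their Theorem 7, specialized to the product class $\mathcal{F}^{\otimes T}(\mathcal{H})$; as such the cleanest route is to invoke Maurer's chain rule for the expected suprema of Gaussian processes and then to track carefully how the shared representation enters. The plan is to view $\hat{\mathfrak{G}}_{\mathbf{X}}(\mathcal{F}^{\otimes T}(\mathcal{H}))$ as the expected supremum of the Gaussian process $Z_{(h,\mathbf{f})} = \frac{1}{\sqrt{nT}}\sum_{t}\sum_{i} g_{ti}^{T} f_t(h(x_{ti}))$ indexed by $(h, f_1, \dots, f_T) \in \mathcal{H}\times\mathcal{F}^{\otimes T}$, and to decompose the index set into its \emph{inner} coordinate $h$ (shared across tasks) and its \emph{outer} coordinates $f_1,\dots,f_T$, which act on the feature vectors $\mathbf{q} = h(\bar X) \in \mathcal{Q}$.

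First I would equip the index set with the canonical metric induced by the process and split each increment through the triangle inequality as $f_t(h(x)) - f'_t(h'(x)) = [f_t(h(x)) - f_t(h'(x))] + [f_t(h'(x)) - f'_t(h'(x))]$: the first bracket is controlled by the Lipschitz constant $L(\mathcal{F})$ times the displacement of the representation, while the second is a pure outer fluctuation evaluated at the fixed features $h'(\bar X) \in \mathcal{Q}$. Then I would run a Dudley-type chaining argument over the product metric, in which the coarse scales are resolved by a cover of $\mathcal{H}$ (yielding the term $L(\mathcal{F})\hat{\mathfrak{G}}_{\mathbf{X}}(\mathcal{H})$, with the Lipschitz factor arising from pulling the outer map through the representation cover) and the fine scales by a cover of $\mathcal{F}$ conditioned on the feature outputs (yielding $\max_{\mathbf{q}\in\mathcal{Q}}\hat{\mathfrak{G}}_{\mathbf{q}}(\mathcal{F})$, the worst case over representations, since the bound must hold uniformly in $h$). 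The residual $4D_{\mathbf{X}}/(nT)^{3/2}$ and the $128\log(nT)$ factor come from truncating the chaining at resolution of order $D_{\mathbf{X}}/(nT)$ and summing the geometric series of chaining links, with $D_{\mathbf{X}}$ bounding the diameter of the process.

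The main obstacle is the decoupling step that lets the supremum of the composed process be bounded by the separate contributions of $\mathcal{H}$ and $\mathcal{F}$: this requires a Gaussian comparison inequality of Slepian / Sudakov--Fernique type, which is exactly where Gaussianity is essential and why the paper insists on Gaussian rather than Rademacher complexity. A secondary subtlety is the bookkeeping of the shared representation, so that $\mathcal{H}$ contributes only once (its complexity measured on all $nT$ points) while the per-task outer maps enter only through $\max_{\mathbf{q}}\hat{\mathfrak{G}}_{\mathbf{q}}(\mathcal{F})$; getting this normalization right is what ultimately produces the $1/\sqrt{Tn_{so}}$ scaling of the representation term once the bound is combined with Theorem \ref{thm:general_case} and the DNN complexity estimate of Theorem \ref{thm:Rad_DNN} to close Corollary \ref{cor:deep_application}.
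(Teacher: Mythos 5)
The first thing to note is that the paper does not prove this statement at all: it is quoted verbatim as Theorem 7 of \cite{tripuraneni2020theory} and used as a black box in Appendix D to derive Corollary \ref{cor:deep_application}, so there is no in-paper proof to compare yours against. Judged on its own terms, your sketch reproduces the architecture of the original (external) argument correctly: the increment split $f_t(h(x)) - f'_t(h'(x)) = [f_t(h(x)) - f_t(h'(x))] + [f_t(h'(x)) - f'_t(h'(x))]$, with the first bracket controlled by $L(\mathcal{F})$ times the representation displacement and the second by the fluctuation of $\mathcal{F}$ over the feature set $\mathcal{Q}$, is precisely what yields the additive constant $C(\mathcal{F}^{\otimes T}(\mathcal{H})) = L(\mathcal{F})\hat{\mathfrak{G}}_{\mathbf{X}}(\mathcal{H}) + \max_{\mathbf{q}\in\mathcal{Q}}\hat{\mathfrak{G}}_{\mathbf{q}}(\mathcal{F})$, and truncating the chaining at resolution of order $D_{\mathbf{X}}/(nT)$ is where the $4D_{\mathbf{X}}/(nT)^{3/2}$ remainder and the $\log(nT)$ factor come from. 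You are also right that Gaussianity is essential and that this is why the paper avoids Rademacher complexity.

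The place where your write-up falls short of a proof is the step you describe as ``the coarse scales are resolved by a cover of $\mathcal{H}$, yielding the term $L(\mathcal{F})\hat{\mathfrak{G}}_{\mathbf{X}}(\mathcal{H})$.'' Chaining over a cover of $\mathcal{H}$ in the data-dependent metric produces an entropy integral involving covering numbers of $\mathcal{H}$, not its Gaussian complexity; converting that metric entropy back into $\hat{\mathfrak{G}}_{\mathbf{X}}(\mathcal{H})$ requires Sudakov minoration (the Gaussian-specific lower bound on an expected supremum by packing numbers), and it is this pairing of a Dudley upper bound with a Sudakov lower bound --- not a Slepian comparison alone --- that accounts for the multiplicative $\log(nT)$ and the large constant $128$. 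Without executing that conversion, what you obtain is an entropy-integral bound on the composite class rather than the claimed statement in terms of $\hat{\mathfrak{G}}_{\mathbf{X}}(\mathcal{H})$. To close the argument you would need to follow the proof of Theorem 7 in \cite{tripuraneni2020theory} (itself an adaptation of Maurer's Gaussian chain rule), where this step is carried out explicitly.
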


With Theorem \ref{thm:gau_chain} applied, we have 
\begin{equation}
    \frac{\hat{\mathfrak{G}}_{T n_{s o}}\left(\mathcal{F}_{s o}^{\otimes T} \circ \mathcal{H}\right)}{{\sqrt{T n_{s o}}}} 
    \leq \frac{8 D_{X}}{(T n_{so})^{2}}+\frac{128 \left(L(\mathcal{F}_{so})\hat{\mathfrak{G}}_{Tn_{so}}(\mathcal{H}) + \max_{\mathbf{q} \in \mathcal{Q}} \hat{\mathfrak{G}}_{\mathbf{q}}(\mathcal{F}_{so})\right) \cdot \log (T n_{so})}{\sqrt{Tn_{so}}}. \label{equ:cor_proof}
\end{equation} 

The second term relies on the Lipschitz constant of DNN, which we bound with the following lemma. Similar results are given by \cite{scaman2018lipschitz,fazlyab2019efficient}.
\begin{lem}
\label{lem:nn_liptz}
If the activation function is $1$-Lipschitz, any function in $\mathcal{L} \circ \mathcal{M}_K$ is $M(\alpha)\Pi_{k = 1}^KM(k)$-Lipschitz with respect to $L_2$ distance.
\end{lem}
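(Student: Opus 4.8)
The goal is to prove Lemma \ref{lem:nn_liptz}: any function in $\mathcal{L} \circ \mathcal{M}_K$ is $M(\alpha)\Pi_{k=1}^K M(k)$-Lipschitz with respect to the $L_2$ distance, given that the activation $\sigma$ is $1$-Lipschitz. The plan is to proceed by composition, exploiting the fact that the Lipschitz constant of a composition is bounded by the product of the individual Lipschitz constants. I would set up the network layer by layer and track how the Lipschitz constant accumulates as we move from the input through each affine map, each activation, and finally the top linear functional.

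First I would recall the elementary fact that if $g$ is $L_g$-Lipschitz and $f$ is $L_f$-Lipschitz (both with respect to $L_2$), then $f \circ g$ is $L_f L_g$-Lipschitz. Next I would bound the Lipschitz constant of a single layer $z \mapsto \sigma(\mathbf{W}_k z)$. Since $\sigma$ is $1$-Lipschitz and applied coordinatewise, the activation does not increase the $L_2$ distance, so this layer is $\|\mathbf{W}_k\|_2$-Lipschitz, where $\|\cdot\|_2$ is the spectral norm; by the model assumption $\|\mathbf{W}_k\|_2 \leq M(k)$. Composing the $K$ layers of $\mathcal{M}_K$ therefore yields a map that is $\Pi_{k=1}^K M(k)$-Lipschitz. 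Finally, the top affine map $x \mapsto \alpha^T x + \beta$ in $\mathcal{L}$ is $\|\alpha\|_2$-Lipschitz, hence $M(\alpha)$-Lipschitz by the definition of $\mathcal{L}$. Multiplying the constants along the composition gives the claimed bound $M(\alpha)\Pi_{k=1}^K M(k)$.

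The argument is essentially a routine telescoping of Lipschitz constants, so I do not expect a genuine obstacle. The one point requiring mild care is the treatment of the activation function: I must use that $\sigma$ is applied entrywise and is $1$-Lipschitz as a scalar map, which implies that the induced map on vectors is $1$-Lipschitz in $L_2$ (this uses $\sum_i |\sigma(a_i) - \sigma(b_i)|^2 \leq \sum_i |a_i - b_i|^2$). A second subtlety is the distinction between the spectral norm and the infinity norm appearing in the model assumption $\max\{\|W_k\|_\infty, \|W_k\|_2\} \leq M(k)$; for the $L_2$-Lipschitz bound it is precisely the spectral norm $\|W_k\|_2$ that controls the operator norm of the linear map, and this is bounded by $M(k)$ as required. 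With these two observations the composition bound follows immediately, and no activation-specific structure (such as ReLU versus sigmoid) is needed beyond $1$-Lipschitzness.
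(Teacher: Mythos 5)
Your proof is correct and follows essentially the same route as the paper's: each layer $z \mapsto \sigma(\mathbf{W}_k z)$ is $\|\mathbf{W}_k\|_2 \leq M(k)$-Lipschitz since the coordinatewise $1$-Lipschitz activation does not increase $L_2$ distance, and composing with the final $M(\alpha)$-Lipschitz linear map gives the product bound. Your added remarks on the coordinatewise activation and the spectral-versus-infinity norm distinction are accurate but do not change the argument.
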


\begin{proof}
The linear mapping $x \mapsto W_k x$ is $\|W_k\|_2$-Lipschitz. Combined with the Lipschitz of activation function we have $\sigma(W_kx)$ is also $\|W_k\|_2$-Lipschitz. Then the composition of different layers has Lipschitz constant $\Pi_{k}M_{k}$. The Lemma follows by adding the Lipschitz of the last linear mapping.
\end{proof}

Thus, we have
$$
    L(\mathcal{F}_{so}) \leq M(\alpha)\Pi_{k = 1}^{K_{so}}M(k).
$$
By Theorem \ref{thm:Rad_DNN}, 
$$
    \max_{\mathbf{q} \in \mathcal{Q}} \hat{\mathfrak{G}}_{\mathbf{q}}(\mathcal{F}_{so}) = \tilde{\mathcal{O}}(\frac{D_{Z}\sqrt{K_{so}}M(\alpha)\Pi_{k = 1}^{K_{so}} M(k)}{\sqrt{n_{so}}}).
$$

Plug the above two equations into (\ref{equ:cor_proof}), we have 
$$
    \frac{\hat{\mathfrak{G}}_{T n_{s o}}\left(\mathcal{F}_{s o}^{\otimes T} \circ \mathcal{H}\right)}{\sqrt{T n_{s o}}} = \tilde{\mathcal{O}}\left( \frac{ D_{X}}{(T n_{s o})^{2}} + M(\alpha)\Pi_{k = 1}^{K_{so}}M(k)(\frac{\hat{\mathfrak{G}}_{Tn_{so}}(\mathcal{H})}{\sqrt{Tn_{so}}} + \frac{D_{Z}\sqrt{K_{so}}}{\sqrt{n_{so}}})\right),
$$
where $D_{X} = \sup_{(h, f, x), (h', f', x') \in \mathcal{H}\times \mathcal{F}_{so} \times \mathcal{X}}\|h\circ f(x) - h'\circ f'(x')\|$.

\begin{lem}
\label{lem:boundedness}
The boundedness parameter $D_X$ satisfies 
$
    D_X \leq D_Z M(\alpha) \Pi_{k = 1}^{K_{so}}M(k).
$
\end{lem}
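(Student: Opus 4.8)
The plan is to reduce the diameter $D_X$ to a bound on the magnitude of a single composed output $\|f\circ h(x)\|_2$ and then to propagate the input bound $\|z\|_2 \le D_Z$ through the $K_{so}$ nonlinear layers and the final linear map. First I would observe that since every $z = h(x) \in \mathcal{Z}$ obeys the stated input bound, it suffices to control $\sup_{f\in\mathcal{F}_{so}}\sup_{z\in\mathcal{Z}}\|f(z)\|_2$: by the triangle inequality $\|f\circ h(x)-f'\circ h'(x')\|_2 \le \|f\circ h(x)\|_2+\|f'\circ h'(x')\|_2$, so the diameter is controlled by (twice) the radius of the output set, and the constant is immaterial because $D_X$ only enters the lower-order term $4D_X/(nT)^{3/2}$ of Theorem \ref{thm:gau_chain}.

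The core computation propagates the norm through the network. Writing $f = \ell\circ \mathcal{M}_{K_{so}}$ with $\ell\in\mathcal{L}$, I would argue layer by layer. Because $\sigma$ is $1$-Lipschitz with $\sigma(0)=0$, each coordinate satisfies $|\sigma(a)|\le|a|$, hence $\|\sigma(v)\|_2\le\|v\|_2$ for any $v$; combined with the operator-norm bound $\|W_k v\|_2\le\|W_k\|_2\|v\|_2\le M(k)\|v\|_2$ this gives $\|\sigma(W_k v)\|_2\le M(k)\|v\|_2$. Iterating across the $K_{so}$ layers yields $\|\mathcal{M}_{K_{so}}(z)\|_2\le \big(\Pi_{k=1}^{K_{so}}M(k)\big)\|z\|_2$, and applying the final linear map with $\|\alpha\|_2\le M(\alpha)$ gives $\|f(z)\|_2 = \|\alpha^T\mathcal{M}_{K_{so}}(z)\|_2\le M(\alpha)\,\Pi_{k=1}^{K_{so}}M(k)\,\|z\|_2$. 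Substituting $\|z\|_2\le D_Z$ produces the claimed bound.

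The main thing to watch is the treatment of the affine offset $\beta$ and of the activation at zero: the clean product bound relies on $\sigma(0)=0$ so that the zero input maps to the zero vector at every hidden layer, and on the absence (or boundedness) of $\beta$ in $\mathcal{L}$ so that $f(0)=0$. If $\beta\neq0$ I would instead bound the diameter directly through the Lipschitz constant from Lemma \ref{lem:nn_liptz}, writing $\|f(z)-f(z')\|_2\le M(\alpha)\Pi_{k=1}^{K_{so}} M(k)\,\|z-z'\|_2$ with $\|z-z'\|_2\le 2D_Z$, in which case the bias cancels. A remaining minor subtlety is the $\ell_\infty$-versus-$\ell_2$ reading of the input constraint: the preliminaries state $\|z\|_\infty\le D_Z$, and I would interpret $D_Z$ here as the $\ell_2$ radius consistent with Theorem \ref{thm:Rad_DNN}, since otherwise only a harmless $\sqrt{p}$ factor is introduced.
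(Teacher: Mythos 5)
Your proposal matches the paper's proof: both bound the diameter $D_X$ by (twice) the supremum of $\|f(z)\|_2$ and then propagate the norm layer by layer via $\|\sigma(W_k v)\|_2 \le \|W_k\|_2\|v\|_2 \le M(k)\|v\|_2$, finishing with the linear map and $\|z\|\le D_Z$. You are in fact more careful than the paper, which silently drops the factor of $2$ between its proof and the lemma statement and ignores both the bias $\beta$ and the $\ell_\infty$-versus-$\ell_2$ reading of $D_Z$ — points you correctly flag as harmless since $D_X$ only enters a lower-order term.
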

\begin{proof}
The proof is given by induction. Let $r_k$ denote the vector-valued output of the $k$-th layer of the prediction function. First note that
\begin{align*}
    D_X \leq 2 \sup_{f\in\mathcal{F}_{so},z\in\mathcal{Z}}\|f(z)\|^2 \leq 2M(\alpha)\|r_{K_{so}}\|^2.
\end{align*}
For each output of the $k$-th layer, we have 
$$
    \|r_{k}\|^2 = \|\sigma(W_{k}r_{k-1})\|^2 \leq \|W_k r_{k-1}\|_2^2 \leq \|W_k\|_2^2 \|r_{k-1}\|_2^2,
$$
where the first inequality is by the 1-Lipschitz of the activation function.
By induction, we have 
$$
D_X \leq 2 D_Z M(\alpha) \Pi_{k = 1}^{K_{so}}M(k).
$$
\end{proof}

Recall that $\mathcal{F}_{ta} = \mathcal{L} \circ \mathcal{M}_{K_{ta} - K_{so} - 1} \circ (\mathcal{F}_{so}^{\otimes p})$ and the Lipschitz constant $L' \leq M(\alpha) \Pi_{K_{so} + 2}^{K_{ta}} M(k)$. Using Theorem \ref{thm:general_case} and apply Lemma \ref{lem:boundedness}, we have 
\begin{align*}
    &\mathcal{E}_{t a}\left(\hat{f}_{t a}, \hat{h}\right)= \\
 &\tilde{\mathcal{O}}\left({p \nu \Pi_{k=K_{so}+2}^{K_{ta}}M(k)}\left( M(\alpha)\Pi_{k = 1}^{K_{so}}M(k)(\frac{\hat{\mathfrak{G}}_{Tn_{so}}(\mathcal{H})}{\sqrt{Tn_{so}}} + \frac{D_{Z}\sqrt{K_{so}}}{\sqrt{n_{so}}})\right)+\frac{ D_{Z} \sqrt{K_{ta}} \cdot M(\alpha) \Pi_{k = 1}^{K_{ta}}M(k)}{\sqrt{n_{ta}}}\right).
\end{align*}

\section{Lower bound results for the diversity of depth-1 NN}
\label{app:more_activation_funcs}
We first give the proof using ReLu activation function (Theorem \ref{thm:ReLu_hard_case}), as the result is more intuitive before we extend the similar results to other activation functions.

\begin{proof}
As we consider arbitrary representation function and covariate distribution, for simplicity we write $X' = h^*(X)$ and $Y' = h(X)$.

We consider a subset of depth-1 neural networks with ReLu activation function: $\mathcal{F} = \{x \mapsto [\langle x, w \rangle - (1-\epsilon/4)]_{+}: \|x\|_2 \leq 1, \|w\|\leq 1\}$. Let $f_w$ be the function with parameter $w$. Consider $U \subset \{x: \|x\|_2 = 1\}$ such that $\langle u, v \rangle \leq 1-\epsilon$ for all $u, v \in U, u \neq v$.

\begin{lem}
For any $\mathcal{T} \subset \mathcal{F}$, $|\mathcal{T}| \leq \lfloor |U| / 2 \rfloor$, there exists a $V \subset U$, $|V| \geq \lfloor |U| / 2 \rfloor$ such that any $f \in \mathcal{T}$, $f(v) = 0$ for all $v \in V$. 
\end{lem}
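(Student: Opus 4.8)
The plan is to reduce the lemma to a single geometric fact about one ReLU neuron, and then close with an elementary counting argument. The key observation I would establish first is that \emph{each} function $f_w \in \mathcal{F}$ can be active (i.e.\ output a nonzero value) on \emph{at most one} point of the packing set $U$. Since $\mathcal{T}$ contains at most $\lfloor |U|/2 \rfloor$ functions, the set of points of $U$ touched by some member of $\mathcal{T}$ is then small, and its complement is the desired $V$.

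To prove the geometric fact, I would argue by contradiction. Recall $f_w(x) = [\langle x, w\rangle - (1-\epsilon/4)]_+$ is nonzero exactly when $\langle x, w\rangle > 1-\epsilon/4$. Suppose some $f_w$ is nonzero at two distinct $u, v \in U$, so $\langle u, w\rangle > 1-\epsilon/4$ and $\langle v, w\rangle > 1-\epsilon/4$. Using $\|u\|_2 = 1$ and $\|w\|_2 \le 1$,
$$
\|u - w\|_2^2 = 1 + \|w\|_2^2 - 2\langle u, w\rangle < 2 - 2(1-\epsilon/4) = \epsilon/2,
$$
and identically $\|v - w\|_2^2 < \epsilon/2$. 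The triangle inequality then gives $\|u - v\|_2 < 2\sqrt{\epsilon/2} = \sqrt{2\epsilon}$, hence $\|u - v\|_2^2 < 2\epsilon$. But the $(1-\epsilon)$-separation defining $U$ forces $\|u - v\|_2^2 = 2 - 2\langle u, v\rangle \ge 2\epsilon$, a contradiction. Thus each neuron is active on at most one element of $U$.

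With this in hand the conclusion follows by counting: setting $U_0 = \{u \in U : f(u) \neq 0 \text{ for some } f \in \mathcal{T}\}$, each $f \in \mathcal{T}$ contributes at most one point, so $|U_0| \le |\mathcal{T}| \le \lfloor |U|/2 \rfloor$. Taking $V = U \setminus U_0$ yields a set on which every $f \in \mathcal{T}$ vanishes, with $|V| \ge |U| - \lfloor |U|/2 \rfloor = \lceil |U|/2 \rceil \ge \lfloor |U|/2 \rfloor$, as claimed. The one delicate point is the balancing of constants in the geometric step: the offset $1-\epsilon/4$ must be chosen relative to the separation $1-\epsilon$ so that the derived upper bound $\|u-v\|_2^2 < 2\epsilon$ strictly contradicts the packing lower bound $\|u-v\|_2^2 \ge 2\epsilon$. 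This is exactly why the construction uses $\epsilon/4$ rather than the $\epsilon/2$ appearing in the theorem statement, since the latter would only give $\|u-v\|_2^2 < 4\epsilon$ and no contradiction.
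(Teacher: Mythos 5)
Your proof is correct, and its overall structure matches the paper's: show that each ReLU neuron in $\mathcal{F}$ can be ``active'' on at most one point of the packing set $U$, then discard at most $|\mathcal{T}|$ points and take $V$ to be the remainder. Where you differ is in how the geometric core is established. The paper argues in terms of angles: it removes, for each $f$, its $\argmax$ point $u_f$ in $U$, notes that the angle from $w$ to the nearest remaining point $v_f$ is at least half the angle between $u_f$ and $v_f$, and invokes the inequality $\cos(\theta/2)\leq 1-(1-\cos\theta)/4$ to conclude $\langle w, v_f\rangle \leq 1-\epsilon/4$; this implicitly treats $w$ as a unit vector. You instead run a direct chord-length computation, $\|u-w\|_2^2 \leq 2-2\langle u,w\rangle < \epsilon/2$, and apply the triangle inequality to contradict the separation $\|u-v\|_2^2\geq 2\epsilon$. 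Your version is more elementary (no half-angle identity) and handles $\|w\|_2\leq 1$ rather than only $\|w\|_2=1$, at no cost in generality; the paper's angular phrasing, on the other hand, is what makes the constant-tracking in its subsequent extension to general activations (Theorem 7 in the appendix) read off directly. Your closing remark correctly identifies why the threshold must be $1-\epsilon/4$ rather than the $1-\epsilon/2$ appearing in the main-text definition of $\mathcal{F}$ — this is in fact an inconsistency in the paper that your accounting of constants makes explicit.
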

\begin{proof}
For any set $\mathcal{T}$, let $U_{\mathcal{T}} = \{u:\exists t \in \mathcal{T}, u \in \argmax_{u \in U} \langle u, f_t\rangle\} $ be a subset of $U$. Thus, $|U_{\mathcal{T}}| \leq T \leq \lfloor |U| / 2 \rfloor$. Let $V = U \setminus U_{\mathcal{T}}$.

For any $f \in \mathcal{T}$, let $u_f$ be its closed point in $U$. Let $v_f$ be its closed point in $V$. Let $\theta_f$ be the angle between $u_f$ and $v_f$. By the definition of $U$, we have $cos(\theta_f) = \langle u_f, v_f\rangle \leq 1-\epsilon$. We will show that $\langle f, v_f \rangle \leq 1-\epsilon / 4$.

Note that since $\langle f, v_f \rangle \leq \langle f, u_f \rangle$, we have the angle between $f$ and $v$ is larger than $\theta_f / 2$.
By the simple fact that $cos(\theta_f / 2) \leq 1-(1-cos(\theta_f)) / 4$, we have $\langle f, v_f \rangle \leq 1-\epsilon/4$. Thus, $f(v_f) = 0$ and $f(v) = 0$ for all $v \in V$.
\end{proof}

For any set of prediction functions in source tasks, let $V$ be the set defined in the above lemma.
Consider any $u \in U \setminus V$ and let $V' = U \setminus (V \cup {u})$. By this construction, we have $f_u(u) = \epsilon / 4$, while all $f \in \mathcal{T}, f(u) = 0$. Note that
$$
    \inf_{f \in \mathcal{F}} \frac{1}{|V'|}\sum_{x \in V'}(f_u(u) - f(x))^2 \geq \frac{|V'| - 1}{16|V'|} \epsilon^2 \geq \frac{1}{32}\epsilon^2,
$$
while 
$$
    \sum_{f \in \mathcal{T}} \frac{1}{|V'|}\sum_{x \in V'}(f(u) - f(x))^2 = 0.
$$

Thus, we let $X' = u$ almost surely and $Y'$ follows a uniform distribution over $V'$. This is true when the covariate distribution the same as $Y'$ and $h = x \mapsto x$ and $h^* = x \mapsto u$. Recalling the definition of diversity, we have 
$$
\inf_{f \in \mathcal{F}}\E_{X', Y'}(f_u(X') - f(Y'))^2 = \epsilon^2/32 \text{ and } {\frac{1}{T}\sum_{f_t \in \mathcal{T}} \inf_{f_t' \in \mathcal{F}}\E_{X', Y'}(f_s(X') - f_s'(Y'))^2 = 0}.
$$


Note that the same result holds when the bias $b \leq -(1-\epsilon/4)$. For general bounded $\|b\|_2 \leq 1$, one can add an extra coordinate in $x$ as an offset.
\end{proof}
In Theorem \ref{thm:ReLu_hard_case}, we show that in depth-1 neural network with ReLu activation function, we will need exponentially many source tasks to achieve diversity. Similar results can be shown for other non-linear activation functions that satisfies the following condition:
\begin{aspt}
Let $\sigma:\mathbb{R} \mapsto \mathbb{R}$ be an activation function. We assume there exists $x_1, x_2 \in \mathbb{R}$, $x_1 > x_2$, such that $|\sigma(x_1)| \geq \sup_{x \leq x_2} |\sigma(x)| M$ for some $M > 0$.
\end{aspt}
ReLu satisfies the assumption with any $M > 0$ for any $x_1 > 0$ and $x_2 \leq 0$. Also note that any continuous activation function that is lower bounded and increasing satisfies this assumption.

\begin{thm}
\label{thm:general_hard_case}
Let $\sigma$ satisfies the above assumption with $M$ for some $x_1$ and $x_2$. Let $\mathcal{F} = \{x \mapsto \sigma({8(x_1 - x_2)} \langle x, w \rangle - 7x_1 + 8 x_2)): \|x\|_2 \leq 1, \|w\|_2 \leq 1\}$. Let $\mathcal{T} = \{f_1, \dots, f_T\}$ be any set of depth-1 neural networks with ReLu activation in $\mathcal{F}$. If $T \leq 2^{d \log(2) - 1}$, there exists some representation $h^*, h' \in \mathcal{H}$, some distribution $P_{X}$ and a target function $f_{ta}^* \in \mathcal{F}$, such that
$$
    \frac{\inf_{{f} \in \mathcal{F}}\mathcal{E}_{ta}(f, h')}{\inf_{\vect{f} \in \mathcal{F}}\mathcal{E}_{so}(\vect{f}, h')}  \geq \frac{(M-1)^2}{8}.
$$
\end{thm}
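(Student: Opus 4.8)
The plan is to generalize the ReLu construction from Theorem \ref{thm:ReLu_hard_case} by exploiting the abstract separation property in the assumption on $\sigma$. The key observation is that the ReLu proof only used two facts: that $\sigma$ maps ``aligned'' inputs (where $\langle u, w\rangle$ is close to $1$) to a large value and maps ``misaligned'' inputs to zero. The new assumption replaces this with the statement that there exist $x_1 > x_2$ with $|\sigma(x_1)| \geq M \sup_{x \leq x_2}|\sigma(x)|$, so a large gap in the pre-activation produces a multiplicative gap $M$ in the output. The affine reparametrization inside $\mathcal{F}$, namely $z \mapsto \sigma(8(x_1-x_2)\langle x,w\rangle - 7x_1 + 8x_2)$, is engineered precisely so that when $\langle x, w\rangle = 1$ the pre-activation equals $x_1$, and when $\langle x, w\rangle \leq 1 - \epsilon$ with the appropriate $\epsilon$ the pre-activation drops to at most $x_2$, landing in the ``small-output'' regime.

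First I would reconstruct the packing set $U \subset \{x : \|x\|_2 = 1\}$ with $\langle u, v\rangle \leq 1 - \epsilon$ for distinct $u, v \in U$, choosing $\epsilon$ so that the standard sphere-packing bound gives $|U| \geq 2^{d\log 2}$ (this is where the $T \leq 2^{d\log(2)-1}$ threshold and the factor $8$ in the reparametrization come from). Next I would re-run the covering lemma verbatim: for any $\mathcal{T} \subset \mathcal{F}$ with $|\mathcal{T}| \leq \lfloor |U|/2\rfloor$, there is a subset $V \subset U$ of size $\geq \lfloor|U|/2\rfloor$ on which every $f \in \mathcal{T}$ takes its ``small'' value, because each source function can be close to at most one packing direction and the half-angle cosine bound $\cos(\theta/2) \leq 1 - (1-\cos\theta)/4$ forces the pre-activation at every other direction down past $x_2$. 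Then I would pick a leftover direction $u \in U \setminus V$, set $V' = U \setminus (V \cup \{u\})$, and note $f_u(u)$ attains the large value $|\sigma(x_1)|$ while every source function on $V'$ is bounded by $\sup_{x\leq x_2}|\sigma(x)| \leq |\sigma(x_1)|/M$.

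With this setup the error computation mirrors the ReLu case but tracks the multiplicative gap. Taking $X' = u$ almost surely and $Y'$ uniform on $V'$, the source risks vanish exactly as before (each $f_t$ is constant across $\{u\}\cup V'$ at its small value, so $\inf_{f_t'}\E(f_t(X') - f_t'(Y'))^2 = 0$), whereas the target risk is bounded below by the squared gap between the large value $|\sigma(x_1)|$ and the best constant approximation of the source-small values, which yields a quantity proportional to $(M-1)^2|\sigma(x_1)|^2$; after normalizing and using $|\sigma(x_1)| \leq M \cdot (\text{small value})$ the ratio comes out to $(M-1)^2/8$. I expect the main obstacle to be the lower bound on the target risk: unlike the ReLu case where misaligned source values are exactly $0$, here they are merely bounded by $|\sigma(x_1)|/M$ rather than zero, so I must carefully lower-bound $\inf_{f\in\mathcal{F}}\frac{1}{|V'|}\sum_{x\in V'}(f_u(u) - f(x))^2$ by controlling how well any admissible $f$ can simultaneously be large at $u$ and match the small target values on $V'$, and then extract the clean constant $(M-1)^2/8$ from the resulting algebra while absorbing the boundary-correction factors $(|V'|-1)/|V'|$ that appeared already in the ReLu proof.
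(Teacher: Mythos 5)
Your construction (the sphere packing $U$, the covering argument producing $V$, and the point-mass-versus-uniform choice of representations) matches the paper's, but the final ratio computation contains a genuine error. You assert that ``the source risks vanish exactly as before'' because ``each $f_t$ is constant across $\{u\}\cup V'$ at its small value.'' That is only true for ReLu, where the small value is identically $0$. For a general activation satisfying the assumption, the pre-activations $8(x_1-x_2)\langle v, w_t\rangle - 7x_1+8x_2$ at the various $v\in V$ are different numbers below $x_2$, so $f_t$ takes \emph{different, generally nonzero} values on $\{u\}\cup V'$ (think of a sigmoid), and the source excess risk is strictly positive. Indeed it has to be: if the denominator were $0$ the theorem would assert an infinite ratio, not $(M-1)^2/8$. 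The step you are missing is an \emph{upper} bound on the source excess risk. Writing $s \coloneqq \sup_{x\le x_2}|\sigma(x)|$, one plugs in $f_t'=f_t$ and uses that both $f_t(u)$ and $f_t(v)$ for $v\in V'$ lie in $[-s,s]$ to get $\inf_{f_t'}\E\,(f_t'(Y')-f_t(X'))^2 \le 4s^2$; it is exactly this factor $4$, together with the $(|V'|-1)/|V'|\ge 1/2$ from the target bound, that produces the $8$ in $(M-1)^2/8$.

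By contrast, the step you flag as the main obstacle --- lower bounding the target risk when the misaligned outputs are nonzero --- is the easy part and goes as in the ReLu case: any $f\in\mathcal{F}$ can leave the small-output regime on at most one point of $V'$, so for at least $|V'|-1$ points $|f_u(u)-f(x)| \ge |\sigma(x_1)| - s \ge (M-1)s$. (Note also that your inequality ``$|\sigma(x_1)| \le M\cdot(\text{small value})$'' is written backwards; the assumption is $|\sigma(x_1)| \ge M s$.) Combining the two bounds gives a ratio of at least $\bigl(\tfrac{1}{2}(M-1)^2 s^2\bigr)/\bigl(4s^2\bigr) = (M-1)^2/8$. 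The fix is therefore localized but essential: replace the claim ``source risk $=0$'' by the upper bound $4s^2$ and take the quotient of the target lower bound with this source upper bound.
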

\begin{proof}
We follow the construction in the proof of Theorem \ref{thm:general_hard_case}, fix an $\epsilon = 1/2$ and let $U \subset \{x: \|x\|_2 = 1\}$ such that $\langle u, v \rangle \leq 1-\epsilon$ for all $u, v \in U, u \neq v$.

For any source tasks set $\mathcal{T}$, let $U_{\mathcal{T}} = \{u:\exists t \in \mathcal{T}, u \in \argmax_{u \in U} \langle u, f_t\rangle\} $ be a subset of $U$. Thus, $|U_{\mathcal{T}}| \leq T \leq \lfloor |U| / 2 \rfloor$. Let $V = U \setminus U_{\mathcal{T}}$. For any $f \in \mathcal{T}$, $v \in V$, similarly to the previous argument, we have $\langle f, v \rangle \leq 1-\epsilon/4 = 1/8$. Therefore, $\langle f, v \rangle \leq 1-\epsilon/4 = 1/8 \leq x_2$.

For any set of prediction functions in source tasks, let $V$ be the set defined in the above lemma.
Consider any $u \in U \setminus V$ and let $V' = U \setminus (V \cup {u})$. By this construction, we have $f_u(u) = \sigma(x_1)$, while all $f \in \mathcal{T}, f(u) = \sigma(x_2)$. Note that
$$
    \inf_{f \in \mathcal{F}} \frac{1}{|V'|}\sum_{x \in V'}(f_u(u) - f(x))^2 \geq \frac{|V'| - 1}{|V'|}  \geq \frac{1}{2}(\sigma(x_1) - \sigma(x_2))^2,
$$
while 
$$
    \sum_{f \in \mathcal{T}} \frac{1}{|V'|}\sum_{x \in V'}(f(u) - f(x))^2 \leq 4\sup_{x \leq x_2} \sigma(x_2)^2.
$$
Thus
$$
    \frac{\inf_{f \in \mathcal{F}} \frac{1}{|V'|}\sum_{x \in V'}(f_u(u) - f(x))^2 }{\sum_{f \in \mathcal{T}} \frac{1}{|V'|}\sum_{x \in V'}(f(u) - f(x))^2} \geq \frac{(M-1)^2}{8}
$$
\end{proof}

\section{Proof of Theorem \ref{thm:eluder}}
\label{app:eluder}

\begin{proof}
Since $\operatorname{dim}^{s}(\mathcal{F}^{*})$ is at least $d_E$, for any set $\{f_1, \dots, f_t\}$, there exists a $f_{t+1}$ that is $(\mathcal{F}^*, \epsilon)$-independent of $\{f_1, \dots, f_{t}\}$. By definition, we have 
$$
    \exists x_1, x_2 \in \mathcal{X}, \sum_{i = 1}^{t}\|f_i(x_1) - f_i(x_2)\|^2_2 \leq \epsilon^2, \text{ while } \|f_{t+1}(x_1) - f_{t+1}(x_2)\|_2^2 \geq \epsilon^2.
$$
We only need to construct appropriate data distribution $P_X$ and representation $g$, $g^*$ to finish the proof. As we do not make any assumption on $g, g^*$ and $P_X$, it would be simple to let $X_1 = g(X)$ and $X_2 = g^*(X)$.

We let the distribution of $X_1$ be the point mass on $x_1$. Let $X_2$ be the uniform distribution over $\{x_1, x_2\}$.

For the excess error of source tasks, we have 
\begin{align*}
&\quad \inf_{f'_1, \dots, f'_{t} }\sum_{i = 1}^{t}\E_{X_1, X_2} \|f'_i(X_1) - f_i(X_2)\|_2^2 \\
&\leq \sum_{i = 1}^{t}\E_{X_1, X_2} \|f_i(X_1) - f_i(X_2)\|_2^2\\
&= \sum_{i = 1}^{t}\frac{1}{2} \|f_i(x_1) - f_i(x_2)\|_2^2
\leq \frac{\epsilon^2}{2}.
\end{align*}
For the excess error of the target task $f_{t+1}$, we have 
\begin{align*}
    &\quad \inf_{f'_{t+1} \in \mathcal{F}}\E_X \|f'_{t+1}(X_1) - f_{t+1}(X_2)\|_2^2 \\
    &= \inf_{f'_{t+1} \in \mathcal{F}}[ \frac{1}{2}\|f'_{t+1}(x_1) - f_{t+1}(x_2)\|_2^2 + \frac{1}{2}\|f'_{t+1}(x_1) - f_{t+1}(x_1)\|_2^2] \\
    &\geq \inf_{a \in \mathbb{R}}[ \frac{1}{2}\|a - f_{t+1}(x_2)\|_2^2 + \frac{1}{2}\|a - f_{t+1}(x_1)\|_2^2]\\
    &= \frac{1}{4}\|f_{t+1}(x_1) - f_{t+1}(x_2)\|_2^2 \geq \frac{\epsilon^2}{4}.
\end{align*}
The statement follows.
\end{proof}

\section{Extending to general loss functions}
\label{app:loss}

In all the above analyses, we assume the square loss function for both source and target tasks. We first show that diversity under square loss implies diversity under any convex loss function. Let $\nabla l(x, y)$ be the gradient of function $\nabla l(\cdot, y)$ evaluated at $x$.

\begin{lem}
\label{lem:loss_conv1}
Any task set $\mathcal{F}$ that is $(\nu, \mu)$-diverse over any prediction space under square loss is also $(\nu/c_1, \mu/c_1)$-diverse over the same space under loss $l$, if $l$ is $c_1$ strongly-convex and for all $x \in \mathcal{X}$
\begin{equation}
\label{equ:zero_bias}
    \E[\nabla l(g^*(X), Y) \mid X = x] = 0 
\end{equation}
\end{lem}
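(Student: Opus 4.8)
The plan is to reduce the excess risk under $l$ to a squared-$L_2$ deviation that agrees, up to the constant $c_1$, with the square-loss excess risk, and then feed this comparison into the square-loss diversity inequality. The proof is essentially a curvature argument built on top of the zero-bias hypothesis.

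First I would use the unbiased-gradient condition to pin down the population minimizer. Writing $g^{*}=f^{*}\circ h^{*}$ for the true mean function, the assumption $\E[\nabla l(g^{*}(X),Y)\mid X=x]=0$ together with convexity of $l(\cdot,y)$ forces $g^{*}$ to minimize the population $l$-risk, so that for every $(f,h)$ the excess risk takes the clean form $\mathcal{E}_l(f,h)=\E_{X,Y}[\,l(f\circ h(X),Y)-l(g^{*}(X),Y)\,]$ with no surviving first-order contribution. This parallels exactly the cancellation already carried out for square loss in the proof of Theorem \ref{thm:source_as_rep}, where the same step gives $\mathcal{E}_{sq}(f,h)=\E_X\|f\circ h(X)-g^{*}(X)\|_2^2$.

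The key step is the curvature comparison. Expanding $l(\cdot,y)$ to second order around $g^{*}(X)$, applying $c_1$-strong convexity, and taking the conditional expectation over $Y$, the linear term is annihilated by the unbiased-gradient condition, leaving
$$
\mathcal{E}_l(f,h)\;\ge\; c_1\,\E_X\|f\circ h(X)-g^{*}(X)\|_2^2 \;=\; c_1\,\mathcal{E}_{sq}(f,h),
$$
uniformly over $(f,h)$ for source and target alike. Since this holds pointwise in $\vect f$, taking the infimum over $\vect f\in\mathcal{F}_{so}^{\otimes T}$ preserves it, so the source (denominator) excess risk under $l$ is at least $c_1$ times its square-loss value. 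Plugging this lower bound on the denominator into the square-loss diversity guarantee and tracking the factor $c_1$ through the ratio and the additive bias term (using that the reference square loss has unit curvature, i.e. $c_1\le 1$, so that the $\mu/\nu$ term is comparable after scaling) yields the claimed constants $(\nu/c_1,\mu/c_1)$.

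The step I expect to be the main obstacle is the target (numerator) side. Strong convexity only produces a \emph{lower} bound on $\mathcal{E}_l$, whereas controlling $\inf_{f\in\mathcal{F}_{ta}}\mathcal{E}_l(f,h)$ from above requires the complementary direction. I would resolve this by keeping the target excess risk measured in the reference square loss, for which the square-loss diversity hypothesis applies verbatim, so that only the source denominator needs the curvature conversion; the more general version, where the target is also measured under $l$, additionally needs the matching second-order \emph{upper} bound from a bounded Hessian. Getting the bookkeeping of $c_1$ in both the multiplicative factor and the bias term to land exactly on $(\nu/c_1,\mu/c_1)$, rather than a looser constant, is the delicate part, and is precisely what the normalization of square loss to unit curvature is there to secure.
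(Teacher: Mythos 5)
Your proposal is correct and follows essentially the same route as the paper: the entire content of the paper's proof is the single inequality $\E_{X,Y}[l(f\circ h(X),Y)-l(f^*\circ h^*(X),Y)]\ge c_1\,\E_X\|f\circ h(X)-f^*\circ h^*(X)\|_2^2$, obtained exactly as you describe by combining $c_1$-strong convexity with the unbiased-gradient condition to kill the first-order term, and then feeding this lower bound on the source excess risk into the square-loss diversity ratio. Your additional observations --- that the target (numerator) side needs the complementary smoothness bound (which the paper defers to its companion Lemma~\ref{lem:loss_conv2}) and that the constant bookkeeping for the bias term implicitly relies on normalizing the square loss so that $c_1\le 1$ --- are points the paper leaves unstated, so you have if anything been more careful than the original.
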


\begin{proof}
Using the definition of the strongly convex and (\ref{equ:zero_bias}),
\begin{align*}
&\quad \E_{X, Y}[l(f_t\circ h(X), Y) - l(f_t^*\circ h^*(X), Y)] \\
&\geq \E_{X, Y}[\nabla l\left(f_t^*\circ h^*(X), Y\right)^{T}\left(f_t^*\circ h^*(X)-f_t\circ h(X)\right)+c_{1}\left\|f_t^*\circ h^*(X)-f_t\circ h(X)\right\|_{2}^{2}]\\
&= c_{1}\E_{X, Y}[\left\|f_t^*\circ h^*(X)-f_t\circ h(X)\right\|_{2}^{2}],
\end{align*}
which is the generalization error under the square loss.
\end{proof}

Note that Equation (\ref{equ:zero_bias}), is a common assumption made in various analyses of stochastic gradient descent \citep{jin2019nonconvex}. 

On the other direction, we show that any established diversity over the target task with square loss also implies the diversity over the same target task with any loss $l$ if $\nabla^2 l \succ c_2 I$ for some $c_2 > 0$.

\begin{lem}
\label{lem:loss_conv2}
Any task set $\mathcal{F}$ that is $(\nu, \mu)$-diverse over a target prediction space under square loss is also $(\nu c_2, \mu c_2)$-diverse over the same space under loss $l$, if $\nabla^2 l(\cdot, y) \succ c_2 I$ for all $y \in \mathcal{Y}_{ta}$ and for all $x \in \mathcal{X}$ we have $\E[\nabla l(g^*(X), Y) \mid X = x] = 0$.
\end{lem}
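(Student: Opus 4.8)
The plan is to run the same pointwise comparison of excess risks that drives the proof of Lemma \ref{lem:loss_conv1}, but now feeding the Hessian condition $\nabla^2 l(\cdot,y)\succ c_2 I$ into the argument, and then to substitute the resulting comparison into the square-loss diversity inequality from Definition \ref{defn:transferable}. Throughout I would fix a representation $h\in\mathcal{H}$ and compare, for every prediction function, its excess risk under $l$ against its excess risk under square loss, recalling that under square loss the excess risk of $(f,h)$ is exactly $\E_X\|f\circ h(X)-f^*\circ h^*(X)\|_2^2$.

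First I would write the $l$-excess risk of a fixed pair $(f,h)$ as $\E_{X,Y}[l(f\circ h(X),Y)-l(f^*\circ h^*(X),Y)]$ and expand $l(\cdot,Y)$ to second order around the true mean $f^*\circ h^*(X)$ with an exact integral remainder. The zero-gradient hypothesis (\ref{equ:zero_bias}), i.e. $\E[\nabla l(f^*\circ h^*(X),Y)\mid X]=0$, annihilates the first-order term after the inner conditional expectation, exactly as in the proof of Lemma \ref{lem:loss_conv1}. The remainder is $\tfrac12\Delta^T\nabla^2 l(\xi,Y)\Delta$ with $\Delta=f\circ h(X)-f^*\circ h^*(X)$ and $\xi$ on the segment between the two means; inserting the spectral condition on $\nabla^2 l$ converts this into a comparison between $\mathcal{E}^{l}(f,h)$ and $\mathcal{E}^{\mathrm{sq}}(f,h)=\E_X\|\Delta\|_2^2$ with constant of order $c_2$. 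Taking the infimum over $f\in\mathcal{F}_{ta}$ relates the best achievable $l$-error and square-error for the target given $h$, and the same operation over $\vect f\in\mathcal{F}_{so}^{\otimes T}$ handles the source. The final step substitutes these comparisons into the square-loss diversity bound $\inf_f\mathcal{E}^{\mathrm{sq}}_{ta}(f,h)\le\nu\bigl(\inf_{\vect f}\mathcal{E}^{\mathrm{sq}}_{so}(\vect f,h)+\mu/\nu\bigr)$ and rebalances the additive bias $\mu/\nu$, so that the transfer constant and bias can be read off as $(\nu c_2,\mu c_2)$.

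The main obstacle is the \emph{direction} in which the Hessian condition must act. The diversity ratio carries the target excess risk in the numerator, so preserving an upper bound on the ratio requires controlling the target $l$-excess risk from above by its square-loss counterpart, whereas the Taylor remainder under $\nabla^2 l(\cdot,y)\succ c_2 I$ naturally produces $\mathcal{E}^{l}\ge \tfrac{c_2}{2}\mathcal{E}^{\mathrm{sq}}$, i.e. a bound that bites on the denominator side. Pinning down exactly which loss (source versus target) is being replaced by $l$, and verifying that the remainder inequality lands in the orientation that yields the claimed multiplicative factor $\nu c_2$ and bias $\mu c_2$ — rather than the reciprocal factor that surfaces in Lemma \ref{lem:loss_conv1} — is the delicate part, and is where the argument must be carried out most carefully, again using (\ref{equ:zero_bias}) to ensure that no first-order slack leaks into either side. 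The remaining manipulations (passing infima through the inequality for $c_2>0$ and rebalancing $\mu/\nu$) are routine once the comparison has been correctly oriented.
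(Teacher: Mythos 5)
Your mechanics coincide exactly with the paper's: expand $l(\cdot,Y)$ to second order around the true mean $f^*\circ h^*(X)$, use (\ref{equ:zero_bias}) to annihilate the first-order term, control the remainder by the Hessian bound, and substitute the resulting pointwise comparison into the square-loss diversity inequality. The obstacle you flag in your second paragraph is real, and you are right to be suspicious: under the hypothesis as literally printed, $\nabla^2 l(\cdot,y)\succ c_2 I$, the expansion only yields $\mathcal{E}^{l}\ge \tfrac{c_2}{2}\mathcal{E}^{\mathrm{sq}}$, which bounds nothing on the numerator side, and the claimed conclusion is in fact false in that reading --- take $l(u,y)=\lambda\|u-y\|_2^2$ with $\lambda$ large, so $\nabla^2 l = 2\lambda I \succ c_2 I$, yet the target $l$-excess is $\lambda$ times the square-loss excess and no fixed pair $(\nu c_2,\mu c_2)$ can work. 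The resolution is that the ``$\succ$'' in the statement is a sign error for ``$\prec$'': the lemma is the smoothness-side companion of Lemma \ref{lem:loss_conv1} (the paper's proof even opens with ``changing the direction of inequality''), and the paper's own display uses ``$\le$'' with constant $c_2$, i.e.\ it is silently assuming $\nabla^2 l(\cdot,y)\prec c_2 I$, matching the Hessian upper bound that appears in the paper's commented-out assumption list.

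With that correction your plan closes immediately, and more simply than you outline, because the comparison is needed on the \emph{target} side only. Smoothness plus (\ref{equ:zero_bias}) give, for every pair $(f,h)$, $\mathcal{E}^{l}_{ta}(f,h)\le c_2\,\mathcal{E}^{\mathrm{sq}}_{ta}(f,h)$ (your exact integral remainder actually produces $c_2/2$; the paper drops the half, a harmless constant-factor slack), hence $\inf_{f}\mathcal{E}^{l}_{ta}(f,h)\le c_2\inf_{f}\mathcal{E}^{\mathrm{sq}}_{ta}(f,h)\le c_2\bigl(\nu\inf_{\vect f}\mathcal{E}^{\mathrm{sq}}_{so}(\vect f,h)+\mu\bigr)$, which is precisely $(\nu c_2,\mu c_2)$-diversity with the source denominator left under square loss --- consistent with the sentence preceding the lemma, which speaks of transferring diversity ``over the same target task with any loss $l$.'' Your proposal to run the same conversion over $\vect f\in\mathcal{F}_{so}^{\otimes T}$ for the source is both unnecessary and unavailable: a one-sided Hessian bound gives only one direction of comparison, and demanding it on both sides would reinstate exactly the orientation clash you identified. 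So the route is the same as the paper's; the single missing ingredient in your attempt was the diagnosis that the printed hypothesis is reversed, a defect your own analysis had in effect already uncovered.
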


\begin{proof}
The proof is the same as the proof above except for changing the direction of inequality. Using the definition of the strongly convex and (\ref{equ:zero_bias}),
\begin{align*}
&\quad \E_{X, Y}[l(f_t\circ h(X), Y) - l(f_t^*\circ h^*(X), Y)] \\
&\leq \E_{X, Y}[\nabla l\left(f_t^*\circ h^*(X), Y\right)^{T}\left(f_t^*\circ h^*(X)-f_t\circ h(X)\right)+c_{2}\left\|f_t^*\circ h^*(X)-f_t\circ h(X)\right\|_{2}^{2}]\\
&= c_{2}\E_{X, Y}[\left\|f_t^*\circ h^*(X)-f_t\circ h(X)\right\|_{2}^{2}],
\end{align*}
which is the generalization error under the square loss.
\end{proof}

\section{Missing proofs in Section 6}
\label{app:multi_output}

Assume we have $T$ tasks, which is $(\nu, \mu)$-diverse over $\mathcal{F}_{so}$, and $\mathcal{Y}_{so} \subset \mathbb{R}$. Then we can construct a new source task $so$ with multivariate outputs, i.e. $\mathcal{Y}_{so} \subset \mathbb{R}^{T}$, such that $\mathcal{H}_{so} = \mathcal{F}_{so}^{\otimes T}$ and each dimension $k$ on the output, given an input $x$, is generated by
$$
    Y_k(X) = f_{k}^* \circ h^*(X) + \epsilon.
$$
Intuitively, this task is equivalent to $T$ source tasks of a single output, which is formally described in the following Theorem.

\begin{thm}
\label{thm:diverse_multi_reg}
Let $so$ be a source task with $\mathcal{Y}_{so} \subset \mathcal{R}^K$ and $f_{so}^*(\cdot) = (m_1^*(\cdot), \dots, m_K^*(\cdot))$ for some class $\mathcal{M}:\mathcal{Z} \mapsto \mathbb{R}$. Then if the task set $t_1, \dots, t_K$ with prediction functions $m_1^*, \dots, m_K^*$ from hypothesis class $\mathcal{M}$ is $(\nu, \mu)$-diverse over $\mathcal{M}$, then $so$ is $(\frac{\nu}{K}, \frac{\mu}{K})$-diverse over the same class.
\end{thm}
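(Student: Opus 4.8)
The plan is to show that the single multivariate regression task $so$ inherits diversity from the collection of $K$ single-output tasks $t_1, \dots, t_K$ by relating their excess errors under the square loss. The key structural fact is that the square loss is \emph{decomposable} across output coordinates: for a vector-valued prediction, the $L_2$ excess error splits as a sum over the $K$ coordinates. Concretely, since $\mathcal{H}_{so} = \mathcal{M}^{\otimes K}$ and $f_{so}^* = (m_1^*, \dots, m_K^*)$, for any representation $h$ we have
$$
    \inf_{f \in \mathcal{H}_{so}} \mathcal{E}_{so}(f, h) = \sum_{k=1}^{K} \inf_{m_k \in \mathcal{M}} \E_X \|m_k \circ h(X) - m_k^* \circ h^*(X)\|_2^2 = \sum_{k=1}^{K} \inf_{m_k \in \mathcal{M}} \mathcal{E}_{t_k}(m_k, h),
$$
where the middle equality uses that minimizing a sum of coordinatewise-separable terms equals the sum of the coordinatewise minima. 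This identity is the engine of the whole argument.

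With this decomposition in hand, I would proceed as follows. First I would fix an arbitrary target $f_{ta}^* \in \mathcal{M}$ and an arbitrary representation $h$, and write down the diversity ratio for the new source task $so$, namely $\inf_{f \in \mathcal{F}_{ta}} \mathcal{E}_{ta}(f, h)$ divided by $\inf_{f \in \mathcal{H}_{so}} \mathcal{E}_{so}(f, h) + \mu K / (\nu K)$. Next I would substitute the decomposition identity into the denominator, turning $\inf_{f \in \mathcal{H}_{so}} \mathcal{E}_{so}(f, h)$ into $\sum_{k=1}^K \inf_{m_k} \mathcal{E}_{t_k}(m_k, h)$. By the assumed $(\nu, \mu)$-diversity of the collection $\{t_1, \dots, t_K\}$ over $\mathcal{M}$, the numerator is controlled term-by-term: for each $k$, $\inf_{f} \mathcal{E}_{ta}(f, h) \le \nu\,(\inf_{m_k}\mathcal{E}_{t_k}(m_k, h) + \mu/\nu)$, and averaging these $K$ inequalities (or equivalently summing and dividing by $K$) lets me bound the numerator by $\frac{\nu}{K}\big(\sum_{k=1}^K \inf_{m_k}\mathcal{E}_{t_k}(m_k, h) + K\mu/\nu\big) = \frac{\nu}{K}\big(\inf_f \mathcal{E}_{so}(f,h) + K\mu/\nu\big)$. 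Rearranging identifies the transfer component as $\nu/K$ and the bias as $\mu/K$, since $\mu/K = (\nu/K)\cdot(\mu/\nu)\cdot$ matches the required form $\mu'/\nu'$ with $\nu' = \nu/K$ and $\mu' = \mu/K$.

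The main obstacle I anticipate is handling the \emph{infimum over a joint class} versus \emph{infimum over each coordinate class} correctly, i.e.\ justifying rigorously that $\inf_{f \in \mathcal{M}^{\otimes K}} \mathcal{E}_{so}(f,h)$ genuinely separates into $\sum_k \inf_{m_k \in \mathcal{M}} \mathcal{E}_{t_k}(m_k,h)$. This requires that the prediction functions for different output coordinates can be chosen independently (which is exactly what $\mathcal{H}_{so} = \mathcal{F}_{so}^{\otimes K}$ encodes) and that the loss has no cross-coordinate coupling (which is where squared $L_2$ loss is essential; a non-decomposable loss would break this step, consistent with the paper's remark that the multi-output case is trivial for $L_2$ but needs extra assumptions otherwise). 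A secondary subtlety is bookkeeping the averaging: the factor $1/K$ appears because the source excess error for the bundled task is the \emph{sum} of $K$ coordinate errors rather than their average, so the diversity constant improves by a factor of $K$. I would take care to track the additive bias term through the averaging so that the $\mu/\nu$ offset rescales consistently to $(\mu/K)/(\nu/K) = \mu/\nu$, confirming the claimed $(\nu/K, \mu/K)$ constants.
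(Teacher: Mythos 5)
Your core argument is the same as the paper's: the squared $L_2$ excess error of the bundled task separates coordinatewise because $\mathcal{M}^{\otimes K}$ lets each output be fit independently, so $\inf_{f}\mathcal{E}_{so}(f,h)=\sum_{k}\inf_{m_k}\mathcal{E}_{t_k}(m_k,h)$, which is $K$ times the \emph{averaged} source error appearing in the definition of diversity for the collection $\{t_1,\dots,t_K\}$; this is exactly where the factor $1/K$ in the transfer component comes from, and it is the paper's proof as well. Two points in your write-up need repair, though. First, the intermediate claim that diversity of the collection gives, \emph{for each} $k$, the bound $\inf_f\mathcal{E}_{ta}(f,h)\le\nu\bigl(\inf_{m_k}\mathcal{E}_{t_k}(m_k,h)+\mu/\nu\bigr)$ is not implied by the hypothesis: the definition of diversity controls the target error by the \emph{average} $\frac{1}{K}\sum_k\inf_{m_k}\mathcal{E}_{t_k}(m_k,h)$, not by each summand separately (a single $t_k$ could have zero excess error at a bad $h$ while the others do not). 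The fix is to skip the per-$k$ step entirely and invoke the definition directly, which already hands you the averaged inequality you go on to use; your final displayed bound is therefore valid, just not for the reason you give.

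Second, the constant bookkeeping at the end does not deliver the stated bias. Your bound $\inf_f\mathcal{E}_{ta}(f,h)\le\frac{\nu}{K}\bigl(\sum_k\inf_{m_k}\mathcal{E}_{t_k}(m_k,h)+K\mu/\nu\bigr)$ expands to $\frac{\nu}{K}\inf_{f}\mathcal{E}_{so}(f,h)+\mu$, i.e.\ transfer component $\nu/K$ with bias $\mu$, whereas $(\nu/K,\mu/K)$-diversity requires the additive term to be $\mu/K$ (equivalently, an offset of $\mu/\nu$ rather than $K\mu/\nu$ inside the parentheses). One cannot shrink the offset from $K\mu/\nu$ to $\mu/\nu$ for free, since that makes the denominator of the diversity ratio smaller. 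To be fair, the paper's own last display performs the same substitution (replacing $\mu/(\nu K)$ by $\mu/\nu$), so what both arguments actually establish is $(\nu/K,\mu)$-diversity; when $\mu=0$ the distinction disappears.
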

\begin{proof}
This can be derived directly from the definition of diversity. We use $t$ to denote the new task. By definition, 
\begin{align*}
    \inf_{{f}_{so}\in\mathcal{F}_{so}} \mathcal{E}_{so}({f}_{so}, h) &= \inf_{{\vect{h}_{so}}} \E_X \|(m_1 \circ (X), \dots, m_K\circ h(X)) - (m_1^*\circ h^*(X), \dots, m_K^*\circ h^*(X))\|_2^2\\
    &= \sum_{k = 1}^K \inf_{m_k \in \mathcal{M}}\|m_k \circ h(X) - m_k^*\circ h^*(X)\|_2^2 \\
    &= \sum_{k = 1}^K \inf_{m_k \in \mathcal{M}} \mathcal{E}_{t_k}(f_{t_k}, h)
\end{align*}
As $(t_1, \dots, t_K)$ is $(\nu, \mu)$-diverse, we have 
$$
    \frac{\sup_{m^* \in \mathcal{M}}\inf_{m \in \mathcal{M}}\mathcal{E}_{m^*}(m, h) }{\inf_{{h}_{so}\in\mathcal{F}_{so}} \mathcal{E}_{so}({f}_{so}, h) +\mu/\nu} = \frac{1}{K} \frac{\sup_{m^* \in \mathcal{M}}\inf_{m \in \mathcal{M}}\mathcal{E}_{m^*}(m, h)}{\frac{1}{K} \sum_{k = 1}^K \inf_{m_k \in \mathcal{M}} \mathcal{E}_{t_k}(h_k, h) + \frac{\mu}{\nu K}} \leq \frac{\nu}{K}.
$$
\end{proof}

For the multiclass classification problem, we try to explain the success of the pretrained model on ImageNet, a single multi-class classification task. For a classification problem with $K$-levels, a common way is to train a model that outputs a $K$-dimensional vector, upon which a Softmax function is applied to give the final classification result. A popular choice of the loss function is the cross-entropy loss.

Now we formally introduce our model. Let the Softmax function be $q:\mathbb{R}^K \mapsto [0, 1]^K$. Assume our response variable $y \in \mathbb{R}^K$ is sampled from a multinomial distribution with mean function $q(f^*_{so} \circ h^*(x)) \in [0, 1]^{K}$, where $h^* \in \mathcal{H}:\mathcal{X} \mapsto \mathcal{Z}$ and $f^*_{so} \in \mathcal{F}_{so}: \mathcal{Z} \mapsto \mathbb{R}^{K}$.  We use the cross-entropy loss $l:[0, 1]^K\times[0, 1]^K \mapsto \mathbb{R}$, $l(p, q) = -\sum_{k=1}^{K}p_k \log(q_k)$.


\begin{aspt}
\label{aspt:boundedness}[Boundedness]
We assume that any $f \circ h(x) \in \mathcal{F}_{so} \times \mathcal{H}$ is bounded in $[-\log(B), \log(B)]$ for some constant positive $B$. We also assume the true function $\min_k U(f^* \circ h^*(x))_k \geq 1/B_*$ for some $B_* > 0$.
\end{aspt}

\begin{thm}
\label{thm:div_multi-cls}
Under Assumption \ref{aspt:boundedness}, a $K$-class classification problem with $f_{so}^*(\cdot) = (m_{1}^{*}(\cdot), \ldots, m_{K}^{*}(\cdot))$ for some $m_{1}^{*}, \dots, m_{K}^{*} \in \mathcal{M}$ and Softmax-cross-entropy loss function is $(2B_*^2B^4\nu, B_*^2\mu)$-diverse over any the function class $\mathcal{M}$ as long as $f_{so}^*$ with $L_2$ loss is $(\nu,\mu)$-diverse over $\mathcal{M}$.
\end{thm}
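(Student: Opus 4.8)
The plan is to reduce the claim to the $L_2$ diversity that is assumed, by comparing the source excess risk under the Softmax--cross-entropy loss with the source excess risk under square loss; the target side of the diversity ratio (a single function in $\mathcal{M}$ under square loss) is untouched, so only the denominator of the ratio in Definition \ref{defn:transferable} needs to be transformed, exactly in the spirit of Lemma \ref{lem:loss_conv1}. First I would rewrite the cross-entropy excess risk in a form amenable to a strong-convexity argument. Writing $\pi^*(x) = q(f_{so}^*\circ h^*(x))$ and $\pi(x) = q(\vect{f}\circ h(x))$, the population cross-entropy excess risk is exactly $\E_X[\operatorname{KL}(\pi^*(X)\,\|\,\pi(X))]$, which is the Bregman divergence of the log-partition function $A(z)=\log\sum_k e^{z_k}$ evaluated at the logits. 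A second-order expansion gives $\operatorname{KL}(\pi^*\|\pi) = \tfrac12 (z-z^*)^\top(\operatorname{diag}(\pi_\xi)-\pi_\xi\pi_\xi^\top)(z-z^*)$ for some $\xi$ on the segment between $z=\vect{f}\circ h(x)$ and $z^*=f_{so}^*\circ h^*(x)$, so the comparison between cross-entropy and square loss is governed entirely by the spectrum of the categorical covariance matrix.

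Next I would use Assumption \ref{aspt:boundedness} to control that spectrum. The bound $\|\vect{f}\circ h(x)\|_\infty\le \log B$ forces every predicted probability into a fixed range bounded away from $0$, and $\min_k \pi^*_k\ge 1/B_*$ does the same for the truth; together these pin the nonzero eigenvalues of $\operatorname{diag}(\pi_\xi)-\pi_\xi\pi_\xi^\top$ between an explicit positive constant and $1$. This should yield a lower bound $\mathcal{E}^{CE}_{so}(\vect{f},h)\ge c_1\,\mathcal{E}^{L_2}_{so}(\vect{f},h)$ with $c_1$ of order $1/(2B_*^2 B^4)$, where $\mathcal{E}^{CE}_{so}$ and $\mathcal{E}^{L_2}_{so}$ denote the source excess risks under the two losses. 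The Lemma \ref{lem:loss_conv1} mechanism (a strongly convex source loss only enlarges the denominator, scaling the transfer component by $1/c_1$) then converts the assumed $(\nu,\mu)$-diversity of the $L_2$ problem into diversity of the classification problem, and bookkeeping these constants together with the bias term in Definition \ref{defn:transferable} should produce exactly $(2B_*^2B^4\nu,\,B_*^2\mu)$.

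The main obstacle is the degeneracy of the Softmax--cross-entropy loss along the all-ones direction: since $A$, and hence the loss, is invariant under adding a constant to every logit, the Hessian $\operatorname{diag}(\pi_\xi)-\pi_\xi\pi_\xi^\top$ annihilates $\vect{1}$, so no strong-convexity bound can hold against the full logit distance $\|z-z^*\|_2^2$. I would circumvent this by carrying out the lower bound in probability space rather than logit space: Pinsker's inequality gives $\operatorname{KL}(\pi^*\|\pi)\ge \tfrac12\|\pi^*-\pi\|_2^2$ (the origin of the factor $2$), and the boundedness of the probabilities lets me pass from the probability distance back to the quantity controlled by the assumed $L_2$ diversity, with the $\log$-Lipschitz constant of the inverse Softmax on the bounded range contributing the $B^4$ factor and $\min_k\pi^*_k\ge 1/B_*$ contributing $B_*^2$. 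The delicate point, which I expect to require the most care, is verifying that this pointwise comparison is compatible with the infimum over $\vect{f}\in\mathcal{M}^{\otimes K}$ in the denominator of the diversity ratio, so that the inequality survives the passage to infima despite the logit gauge freedom.
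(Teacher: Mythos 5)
Your core machinery is the same as the paper's: the excess cross-entropy risk is rewritten as $\E_X[\operatorname{KL}(U(f^*\circ h^*)\,\|\,U(f\circ h))]$, the KL is lower-bounded by the squared $\ell_2$ distance between probability vectors (the paper's Lemma~\ref{lem:entropy}, i.e.\ Pinsker, which is exactly your fallback after correctly abandoning the Hessian/strong-convexity route because $\operatorname{diag}(\pi)-\pi\pi^\top$ annihilates $\vect{1}$), and Assumption~\ref{aspt:boundedness} is used to pass from probability space back to logit space, picking up the $B^4$. However, there is a structural misstep: you declare that the target side of the diversity ratio is ``untouched'' because the target is a single function in $\mathcal{M}$ under square loss, so that only the denominator needs transforming. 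In the paper's proof the target task is itself a softmax--cross-entropy classification task (with prediction functions in $\mathcal{M}^{\otimes K'}$), and the numerator must be transformed as well: one needs the \emph{reverse} Pinsker-type bound $\operatorname{KL}(p,q)\leq \tfrac{1}{b^2}\sum_i(p_i-q_i)^2$, valid when $\min_i p_i\geq b$, applied with $b=1/B_*$ to the target's KL. That is the \emph{only} place $B_*$ can enter the argument; your own bookkeeping, which attributes the $B_*^2$ to the denominator's passage from probability to logit distance while leaving the numerator alone, is internally inconsistent --- the denominator's lower bound needs only Pinsker and the logit bound $B$, never $B_*$. Under your reading (square-loss target) the achievable constant would be of the form $2B^4\nu$ with no $B_*^2$, so the claimed $(2B_*^2B^4\nu,\,B_*^2\mu)$ cannot be reached by your route.

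On the positive side, you explicitly flag the softmax gauge freedom (invariance under adding a multiple of $\vect{1}$ to the logits) as the obstacle to converting $\|U(z)-U(z^*)\|_2$ back into $\|z-z^*\|_2$ in the denominator. This is a real subtlety: the paper's own proof performs this conversion in two steps (``boundedness of $\sum_i\exp(\cdot)$'' and ``Lipschitz and convexity of $\exp$'') without addressing that the softmax is not injective, so a shifted logit vector can have zero probability-space distance but nonzero logit-space distance. You identify the problem but do not resolve it, and neither does the paper; if you fix the numerator issue above, this is the remaining point that would need an explicit argument (e.g.\ a normalization convention on $\mathcal{M}^{\otimes K}$ or a quantitative inverse bound exploiting that the boundedness constraints limit the admissible shift).
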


\begin{proof}
We consider any target task with prediction function from $\mathcal{M}^{\otimes K'}$. Let $U:\mathbb{R}^{K'}\mapsto [0, 1]^{K'}$ be the softmax function. We first try to remove the cross-entropy loss. By definition, the generalization error of any $f \circ h \in \mathcal{M}^{\otimes K'} \times \mathcal{H}$ is
\begin{align*}
    &\quad \mathcal{E}_{ta}(f \circ h) - \mathcal{E}_{ta}(h^* \circ h^*)\\
    &= \E_{X, Y}[-\sum_{i = 1}^{K'} \mathbb{1}(Y = i) \log(\frac{U(f \circ h)}{U(f ^*\circ h^*)})] \\
    &= \E_{X}[-\sum_{i = 1}^{K'} U(f ^*\circ h^*) \log(\frac{U(f \circ h)}{U(f ^*\circ h^*)})], \numberthis \label{equ:remove_entropy}
\end{align*}
which gives us the KL-divergence between two distributions $U(f \circ h)$ and $U(f ^*\circ h^*)$.

\begin{lem}
\label{lem:entropy}
For any two discrete distributions $p, q \in [0, 1]^{K}$, we have 
$$
    KL(p, q) \geq \frac{1}{2}(\sum_{i = 1}^K |p-q|)^2 \geq \frac{1}{2} \sum_{i = 1}^K (p_i-q_i)^2.
$$
On the other hand, if $\min_i p_i \geq b$ for some positive $b$, then
$$
    KL(p, q) \leq \frac{1}{b^2} \sum_{i = 1}^{K'}(p_i-q_i)^2.
$$
\begin{proof}
The first inequality is from Theorem 2 in \cite{dragomir2000some}. The second inequality is by simple calculus.
\end{proof}

By the assumption \ref{aspt:boundedness}, we have that for any $h, g, x$,
$$
    U(f \circ h(x))_i \in [\frac{1}{K B^2}, \frac{1}{1+(K-1)/ B^2}].
$$
We also have 
$
    \sum_{i= 1}^K \exp(f \circ h(x)_i) \in [K/ B, K B].
$
To proceed, 
\begin{align*}
    &\quad\quad \frac{\sup_{ f_{ta}^* \in \mathcal{M}^{\otimes K_{ta}}} \inf_{\hat{f}_{ta}} \mathcal{E}_{ta}(\hat{f}_{ta}\circ h) - \mathcal{E}_{ta}({f}^*_{ta}\circ h^*)}{\inf_{\hat{f}_{so} \in \mathcal{M}^{\otimes K_{so}}}\mathcal{E}_{so}(\hat{f}_{so}\circ h) - \mathcal{E}_{so}({f}^*_{so}\circ h^*) + \frac{B_*^2\mu}{2B_*^2B^4\nu}} \\
    &\quad \quad \text{(Applying (\ref{equ:remove_entropy}) and Lemma \ref{lem:entropy})}\\
    &\leq {2B_*^2}\frac{\sup_{ f_{ta}^* \in \mathcal{M}^{\otimes K_{ta}}} \inf_{\hat{f}_{ta}} \E_X\|U(\hat{f}_{ta}\circ h(X)) - U({f}^*_{ta}\circ h^*(X))\|^2_2}{\inf_{\hat{f}_{so} \in \mathcal{M}^{\otimes K_{so}}} \E_X \|U(\hat{f}_{so}\circ h(X)) - U({f}^*_{so}\circ h^*(X))\|_2^2 + \frac{\mu}{B^4\nu}}\\
    &\quad\quad \text{(Using the boundedness of $\sum_{i=1}^{K} \exp \left(f \circ h(x)_{i}\right)$)}\\
    &\leq 2B_*^2B^2 \frac{\sup_{ f_{ta}^* \in \mathcal{M}^{\otimes K_{ta}}} \inf_{\hat{f}_{ta}} \E_X\|\exp(\hat{f}_{ta}\circ h(X)) - \exp({f}^*_{ta}\circ h^*(X))\|^2_2}{\inf_{\hat{f}_{so} \in \mathcal{M}^{\otimes K_{so}}} \E_X \|\exp(\hat{f}_{so}\circ h(X)) - \exp({f}^*_{so}\circ h^*(X))\|_2^2 + \frac{\mu}{B^2\nu}}\\
    &\quad\quad \text{(Using the Lipschitz and convexity of $exp$)}\\
    &\leq  2B_*^2B^4 \frac{\sup_{ f_{ta}^* \in \mathcal{M}^{\otimes K_{ta}}} \inf_{\hat{f}_{ta}} \E_X\|\hat{f}_{ta}\circ h(X) - {f}^*_{ta}\circ h^*(X)\|^2_2}{\inf_{\hat{f}_{so} \in \mathcal{M}^{\otimes K_{so}}} \E_X \|\hat{f}_{so}\circ h(X) - {f}^*_{so}\circ h^*(X)\|_2^2 + \mu/\nu}\\
    &\leq 2B_*^2B^4\nu.
\end{align*}

The diversity follows.
\end{lem}

\end{proof}

\section{Experimental details}
Each dimension of inputs is generated from $\mathcal{N}(0, 1)$. We use Adam with default parameters for all the training with a learning rate 0.001. We choose ReLu as the activation function.

\paragraph{True parameters.} The true parameters are initialized in the following way. All the biases are set by 0. The weights in the shared representation are sampled from $\mathcal{N}(0, 1/\sqrt{n_u})$. The weights in the prediction function for the source task are set to be orthonormal when $K_{so} = 1$ and $p \leq n_u$. For the target prediction function or source prediction function if $K_{so} > 1$, the weights are sampled from $\mathcal{N}(0, 1/\sqrt{n_{u}})$ as in the representation part.

\paragraph{Hyperparameters.} Without further mentioning, we use the number of hidden units, $n_u = 4$, input dimension $p = 4$, $K = 5$, $K_{ta} = K_{so} = 1$, the number of observations $n_{so} = 1000$ and $n_{ta} = 100$ by default. Note that since $p$ is set to be 4 by default, equivalently we will have $n_{so} \cdot p = 4000$ observations. 

\paragraph{Computation resources.} MacBook Pro (13-inch, 2019, Two Thunderbolt 3 ports); Processor: 1.4 GHz Quad-Core Intel Core i5. Memory: 16 GB 2133 MHz LPDDR3.

\end{document}